\documentclass[11pt]{article}

\usepackage[final]{acl}

\usepackage{times}
\usepackage{latexsym}
\usepackage[T1]{fontenc}
\usepackage[utf8]{inputenc}
\usepackage{microtype}
\usepackage{inconsolata}
\usepackage{graphicx}
\usepackage{amsmath}
\usepackage{amssymb}
\usepackage{booktabs}
\usepackage{multirow}
\usepackage{tabularx}
\usepackage{hyperref}
\usepackage[table]{xcolor}
\usepackage{subcaption}
\usepackage{enumitem}
\usepackage{tikz}
\usetikzlibrary{arrows.meta, positioning, calc, decorations.pathreplacing}
\usepackage{amsthm}
\usepackage{pifont}
\usepackage{adjustbox}
\usepackage{placeins}

\newtheorem{theorem}{Theorem}
\newtheorem{lemma}{Lemma}

\theoremstyle{remark}
\newtheorem{remark}{Remark}

\theoremstyle{definition}
\newtheorem{assumption}{Assumption}

\DeclareFontShape{T1}{ptm}{m}{scit}{<-> ssub * ptm/m/it}{}
\DeclareFontShape{T1}{ptm}{b}{scit}{<-> ssub * ptm/b/it}{}

\newcommand{\rmsnorm}{\mathrm{RMSNorm}}
\newcommand{\qknorm}{\textsc{QK-RMSNorm}}
\newcommand{\opnorm}[1]{\|#1\|_{\mathrm{op}}}
\newcommand{\frob}[1]{\|#1\|_F}
\title{Text Capability Loss in Vision-Language Adaptation:\\ An Attention-Sink Diagnosis}

\makeatletter
\ifacl@finalcopy
\author{
  Minsik Choi\textsuperscript{*} \\
  Korea University \\
  \texttt{mszzang2002@korea.ac.kr} \\
  \And
  Geewook Kim\textsuperscript{*} \\
  NAVER Cloud AI \\
  KAIST AI \\
  \texttt{gwkim.rsrch@gmail.com} \\
  \And
  Young Geun Kim\textsuperscript{\dag} \\
  Korea University \\
  \texttt{younggeun\_kim@korea.ac.kr} \\
}
\else
\author{Anonymous ACL submission}
\fi
\makeatother

\begin{document}
\maketitle

\makeatletter
\ifacl@finalcopy
\renewcommand{\thefootnote}{\fnsymbol{footnote}}
\footnotetext[1]{Minsik Choi and Geewook Kim contributed equally to this work and share first authorship.}
\footnotetext[2]{Corresponding author.}
\renewcommand{\thefootnote}{\arabic{footnote}}
\setcounter{footnote}{0}
\fi
\makeatother

\begin{abstract}
Fine-tuning a pretrained LLM into a vision-language model (VLM) can erode the backbone's text capability, with the damage concentrated on tasks that require following exact output rules, such as instruction following, chain-of-thought reasoning graded on a strictly parsed final answer, and similar evaluations with strict graders.
We trace this gap to \emph{attention-sink corruption}: VL fine-tuning perturbs the early sink position that anchors a large fraction of attention probability, and how well the base LLM preserves its sink tracks how much of the affected capability survives adaptation.
Building on this view, we introduce \emph{Sink Strength}, a single scalar computed on the base LLM in a few seconds on a single GPU that predicts post-VL degradation without any VL training.
It consistently tracks relative degradation across the six VLM–LLM pairs and multiple format-sensitive tasks.
Complementing this diagnostic, we find that post-pretraining
$\qknorm$ injection fails to reproduce the protection of native
$\qknorm$, while several off-the-shelf weight-merging settings
fail to recover the lost capability after VL training.
These negative results underscore the value of screening
backbones with Sink Strength before VL training and narrow
the intervention space toward head-selective training-time protection.
\end{abstract}

\section{Introduction}
\label{sec:intro}

Vision-language models (VLMs) are commonly built by attaching a vision encoder and projector to a pretrained large language model (LLM) and jointly fine-tuning on multimodal data.
We find that this adaptation can erode the LLM's text capability, with the damage concentrated on tasks we call \emph{format-sensitive}: those that require following exact output rules, such as instruction following or chain-of-thought reasoning whose final answer is parsed under a strict rule.
Across the released VLM--LLM pairs we study, the text-capability
gap reaches double digits on multiple such tasks.
A text-only further-training counterpart shows substantially smaller degradation, while a separately matched VL-versus-text training trajectory isolates the modality difference (Sections~\ref{sec:problem_setup} and~\ref{sec:modality_control}).
Existing remedies each help in part but at a cost, and none explains \emph{why} the loss happens.
The existing remedies include re-blending text-only data into the multimodal supervised fine-tuning (SFT) mix~\citep{lin2024vila,tong2024cambrian,tu2025mlan}, freezing the LLM~\citep{zhu2024minigpt}, low-rank adapters (LoRA)~\citep{liu2024improved,hu2022lora}, and post-hoc neuron re-merging~\citep{yu2025locate}.

We trace this loss to \emph{attention-sink corruption}.
A modern LLM concentrates a large fraction of its attention onto a small number of early positions~\citep{xiao2024efficient,barbero2025llms}, and the hidden states at
those positions are dominated by a few fixed feature dimensions carrying massive
activations~\citep{sun2024massive}.
This sink absorbs diffuse attention mass that would otherwise spread across non-informative positions~\citep{xiao2024efficient}.
We hypothesize that this stabilizes per-position attention to the surface tokens that format-sensitive evaluators grade, and the rest of the paper tests that hypothesis.
We show that VL fine-tuning perturbs the query and key projections, $W_q$ and $W_k$,
that read those dimensions; because the induced logit shift scales with input
magnitude, the perturbation is amplified precisely at the sink, collapsing the per-head sink (Figure~\ref{fig:teaser}).
How well the base LLM's sink survives this perturbation predicts how much post-VL damage we see on format-sensitive tasks.
Base LLMs whose query and key projections are less sensitive to
input magnitude exhibit greater sink preservation under VL adaptation (Section~\ref{sec:method}).
Architectures bound this sensitivity by normalizing the projections, but the granularity matters.
Applied per head, the normalization removes the raw-magnitude dependence outright.
Applied across a whole layer, it instead ties each head's scale to that head's share of the layer's energy, so a weak head is compressed rather than normalized.
Such a backbone reaches VL training with its per-head sink already weak.
Predicting post-VL damage therefore requires accounting for corruption inherited from pre-training as well as corruption induced by VL fine-tuning.
Preserved-sink and collapsed-sink backbones separate cleanly on instruction following: every backbone that preserves its sink stays within $5.4$ pt of its reference LLM, while every backbone whose sink collapses loses at least $7.9$ pt.

\begin{figure}[t]
\centering
\includegraphics[width=\columnwidth]{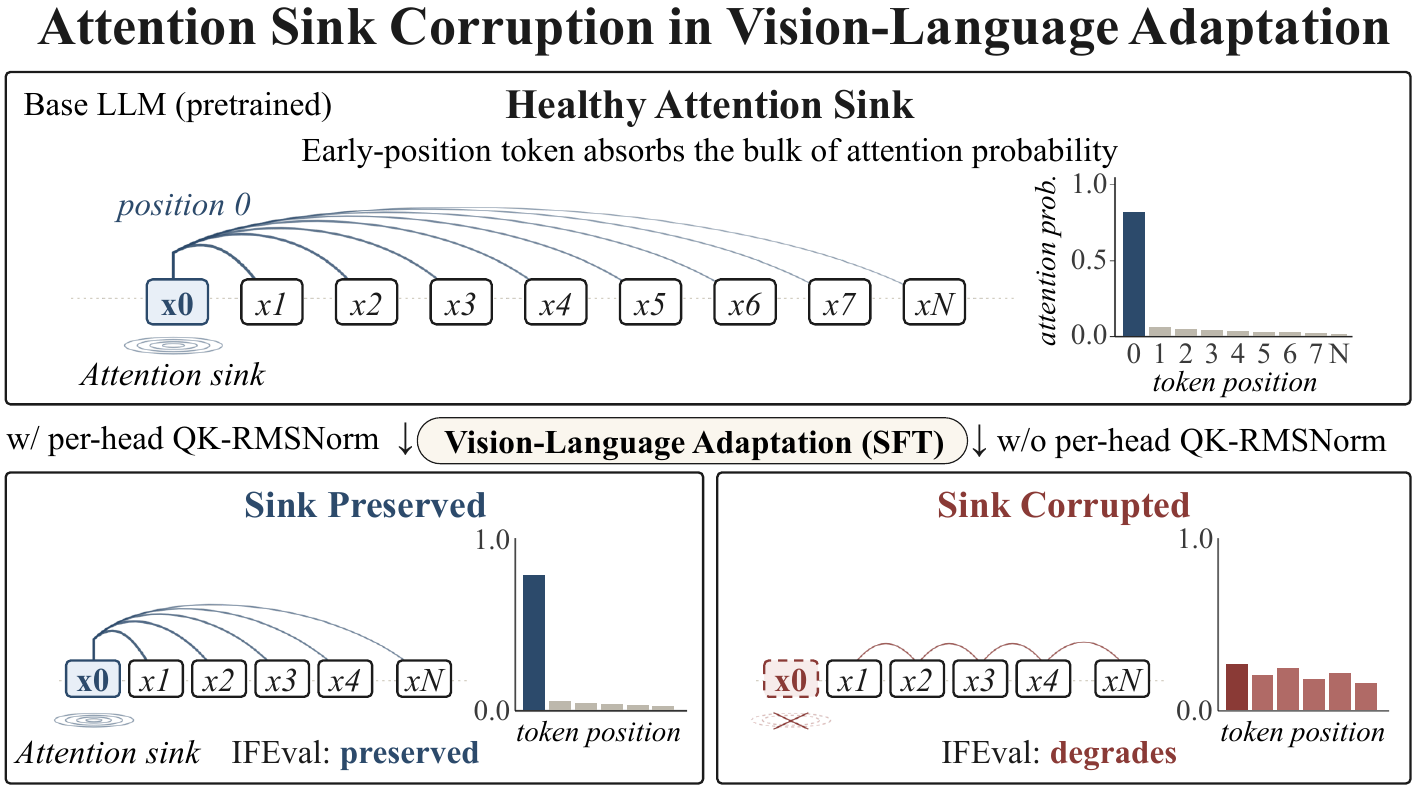}
\caption{Attention-sink corruption in VL fine-tuning: the base LLM concentrates attention on an early sink position; whether this concentration survives VL fine-tuning varies sharply across language backbones.}
\label{fig:teaser}
\end{figure}

Building on this account, we introduce \emph{Sink Strength} $S$ (Section~\ref{sec:predictor}), a single scalar computed entirely on the base LLM using a small number of inference-only forward passes and no VL training.
$S$ quantifies the head-level sink concentration, ranks backbones by their post-VL damage (Spearman $\rho = 0.97$), and the ranking remains strongly correlated across multiple format-sensitive tasks.
Because $S$ is measured before VL training, it exposes base-side sink weakness independently of the subsequent VL update; Section~\ref{sec:method} then combines this quantity with the post-VL gap perturbation to separate base-side vulnerability from update-induced corruption.

Given a high-risk backbone, can simple interventions prevent or recover the damage?
We test two complementary negative controls (Section~\ref{sec:controls}): post-pretraining injection of the $\qknorm$ module into a matched base LLM followed by VL training, and several post-VL weight-merging methods spanning linear, spectral, and sparsified families~\citep{ilharco2022editing,yadav2023ties,yu2024language,gargiulo2025task,lee2025star,wortsman2022model}.
The former fails to reproduce the protection associated with native $\qknorm$, while the latter fails to recover the lost capability under the settings we test, narrowing the intervention space toward head-selective training-time protection.

Our contributions are:
(i)~\textbf{Sink Strength $S$}, a base-LLM scalar predicting post-VL damage on format-sensitive tasks from inference-only forward passes (rank correlations remain strong across our six VLM--LLM pairs and four tasks, $\rho \ge 0.88$), providing a pre-VL screen;
(ii) a \textbf{sink-anchored mechanism} (Theorem~\ref{thm:perturb})
showing how per-head $\qknorm$ on the query and key projections
removes explicit raw-input-magnitude sensitivity from the sink-gap
perturbation bound, supported by margin calibration across the five headline pairs and two layerwise-$\qknorm$ controls;
(iii)~\textbf{confound and negative controls}: text-only versus VL controls supporting modality-specific degradation, together with two complementary negative controls---post-pretraining $\qknorm$ injection and post-VL weight-space merging.

\begin{figure*}[t!]
\centering
\includegraphics[width=\textwidth]{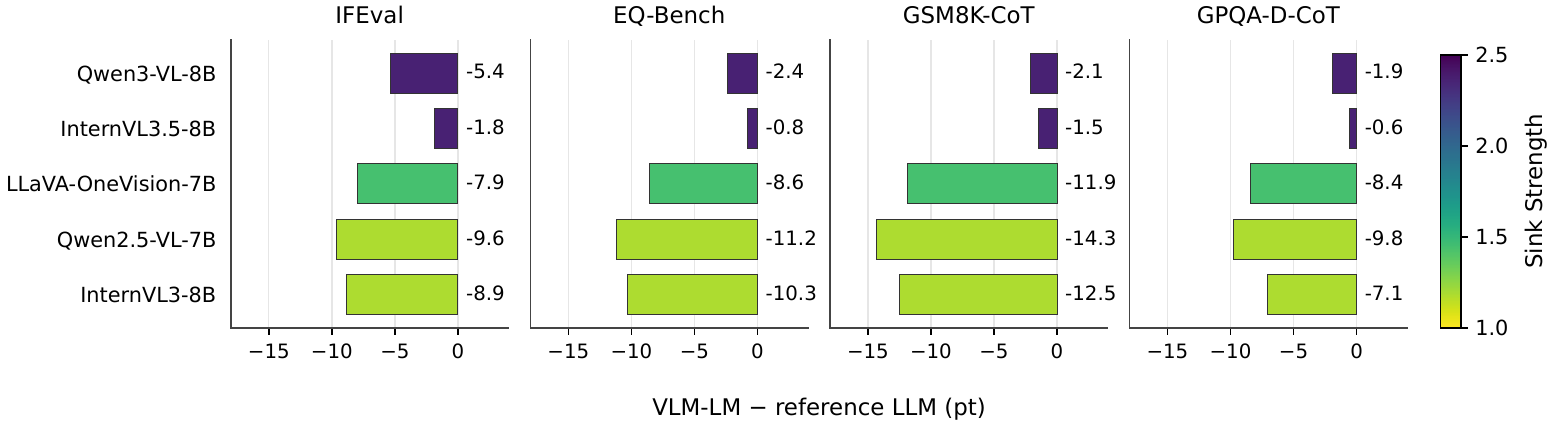}
\caption{Text-capability gap across our headline VLM--LLM pairs on instruction-following and three further format-sensitive tasks. Color: Sink Strength $S$ measured on the reference LLM (Section~\ref{sec:predictor}).}
\label{fig:damage_bars}
\end{figure*}
\section{Background \& Related Work}
\label{sec:related}

\paragraph{VL fine-tuning and text-side erosion.}
Standard VLM recipes~\citep{liu2023visual,liu2024improved} update the LLM decoder weights during adaptation, typically over multiple stages of alignment and instruction tuning.
A recurrent observation across this paradigm is that the decoder loses some of the underlying LLM's text-side capability~\citep{zhang2024wings,zhai2024investigating,lee2025does,ratzlaff2025training,srivastava2024improving}, with instruction-following and other format-sensitive behaviors among the hardest hit.
Existing remedies are data-side (text-only re-blending~\citep{lin2024vila,tong2024cambrian,tu2025mlan}), training-side (frozen LLM~\citep{zhu2024minigpt} or low-rank adapter~\citep{hu2022lora,liu2024improved}), or post-hoc (neuron-level re-merging~\citep{yu2025locate}), each closing part of the gap.
Recent work has also explored weight merging within multimodal instruction-tuning pipelines \citep{choi2026decentralized}.
Our primary diagnostic analysis leaves the released VLM training recipes untouched and instead asks which base-LLM property predicts how much text capability survives adaptation.

\paragraph{Attention sinks and mechanistic position.}
The attention-sink phenomenon~\citep{xiao2024efficient} has been traced to massive activations~\citep{sun2024massive}, given an over-mixing rationale~\citep{barbero2025llms}, linked to embedding-level bookkeeping~\citep{zhang2025attention}, and its emergence conditions characterized~\citep{gu2025attention}, with the vision analogue motivating register tokens~\citep{darcet2024vision}.
While these characterize the sink of a fixed network at inference time, we study what happens to it \emph{during} fine-tuning.
VL fine-tuning is known to shift attention patterns within the adapted decoder~\citep{zhang2024wings}, and we operationalize this shift as corruption of the early-position sink that anchors text-side stability.
$\qknorm$ replaces the L2 normalization of the original query--key normalization~\citep{henry2020query} with RMSNorm, motivated by entropy-collapse stability arguments~\citep{zhai2023stabilizing}.
$\qknorm$ adoption varies across base LLMs: present in Qwen3~\citep{yang2025qwen3}, absent in Qwen2.5~\citep{qwen2025qwen25technicalreport} and Llama-3~\citep{grattafiori2024llama}.
This architectural axis is examined throughout our analysis.
Our bound and saturation lemma are in the spirit of small-circuit attention analyses~\citep{olsson2022context,geva2021transformer} and fine-tuning circuit-stability work~\citep{wang2025towards}.

\section{Sink Strength: A Pre-VL Diagnostic}
\label{sec:predictor}

\subsection{Problem Setup}
\label{sec:problem_setup}

An LM-only further-training counterpart shows substantially smaller
text-capability degradation than the corresponding VL endpoint across
the four format-sensitive tasks; Section~\ref{sec:controls} provides a separately
matched VL-versus-text training trajectory from a shared base LLM.

\paragraph{Setup.}
The five headline pairs come from three model families:
Qwen, InternVL, and LLaVA-OneVision.
Full per-pair metadata is in Table~\ref{tab:pair_metadata}.\footnote{Hugging Face URLs and paper references for every model and evaluation dataset are collected in Appendix~\ref{app:datasets}.}
All released VLMs in the headline panel update the language backbone rather than relying on LoRA-only adaptation, removing the adapter-merge confound.
For each released VLM, we extract its language backbone as a standalone causal LM (the VLM-LM hereafter; Appendix~\ref{sec:extraction} and score it on a nine-task text suite (Appendix~\ref{sec:protocol}) against the corresponding text-only reference LLM, a comparison point rather than an assumed immediate predecessor of multimodal training.

\begin{table}[t]
\centering
\setlength{\tabcolsep}{3pt}
\fontsize{8}{10}\selectfont
\begin{adjustbox}{max width=\columnwidth}
\begin{tabular}{@{}l l l l@{}}
\toprule
Pair & Reference LLM & Post-training & QK \\
\midrule
Qwen3-VL     & Qwen3-8B            & SFT+RL       & \textbf{per-head}  \\
InternVL3.5  & Qwen3-8B            & SFT+RL (CascadeRL)    & \textbf{per-head}  \\
\addlinespace[1pt]
Qwen2.5-VL   & Qwen2.5-7B-Instruct & SFT+DPO    & none      \\
InternVL3    & Qwen2.5-7B-Instruct & SFT+MPO    & none      \\
LLaVA-OV     & Qwen2-7B-Instruct   & SFT      & none      \\
\midrule
\multicolumn{4}{@{}l}{\emph{Non-headline control (Appendix~\ref{app:molmo})}} \\
Molmo2-O     & Olmo-3-7B-Instruct  & SFT (PixMo) & \emph{layerwise} \\
\bottomrule
\end{tabular}
\end{adjustbox}
\caption{Five headline VLM--LLM pairs and the Molmo2-O control. "QK" reports the per-head / layerwise / none $\qknorm$ configuration. Per-head $\qknorm$ normalizes each attention head by its own RMS, while the layerwise variant applies a single RMS across all concatenated heads.}
\label{tab:pair_metadata}
\end{table}

\paragraph{Format-sensitive tasks.}
We call IFEval, EQ-Bench, GSM8K-CoT, and GPQA-Diamond-CoT \emph{format-sensitive} because each grades a strict, parseable surface gate: per-prompt format compliance for IFEval~\citep{zhou2023instruction} (our headline metric), score-block parsing for EQ-Bench, and CoT answer-token extraction for GSM8K-CoT and GPQA-Diamond-CoT.
Because these gates are parsed rather than scored semantically, small attention-pattern drift can disproportionately flip the outcome.
Concretely, the regressing prompts we inspect often fail parseable rules: length constraints (a $1200$-word essay truncated), exact-phrase requirements ("My answer is maybe" reduced to "My answer"), single-language constraints, and format markers.
A per-category breakdown across pairs is in Appendix~\ref{app:failure_categories}.

\paragraph{Per-pair text-capability gap.}
Figure~\ref{fig:damage_bars} (IFEval row) shows a clear VLM--LLM text-capability gap across our five headline pairs: two pairs show gaps of at most 5.4 pt (InternVL3.5 at $-1.8$, Qwen3-VL at $-5.4$), while three show gaps of at least 7.9 pt (LLaVA-OV $-7.9$, InternVL3 $-8.9$, Qwen2.5-VL $-9.6$).
The non-headline pair (Molmo2-O on Olmo-3) shows a $-18.7$-pt gap and is analyzed separately as a control (Section~\ref{sec:method}).

\paragraph{Per-prompt regression.}
On the $541$ IFEval validation prompts each backbone evaluates, we tag every prompt as \emph{both\_pass} (LLM and VLM-LM both score $\ge\!0.5$), \emph{regression} (LLM pass, VLM-LM fail), \emph{recovered} (LLM fail, VLM-LM pass), or \emph{both\_fail} (Table~\ref{tab:regression}).
The net regression rate reproduces the IFEval delta pair-by-pair.
The regressions are spread across $43$--$126$ distinct prompts per pair, so the behavioral split reflects broad instruction-following degradation rather than a few outlier prompts.

\begin{table}[t]
\centering
\setlength{\tabcolsep}{3pt}
\fontsize{8}{10}\selectfont
\begin{adjustbox}{max width=\columnwidth}
\begin{tabular}{@{}l l r r r r r@{}}
\toprule
Pair & QK & Both\,$\checkmark$ & Lost & Gained & Both\,$\times$ & Net (\%) \\
\midrule
Qwen3-VL     & per-head  & 394 & 56  & 27 & 64  & $\mathbf{5.4}$ \\
InternVL3.5  & per-head  & 407 & 43  & 33 & 58  & $\mathbf{1.8}$ \\
\addlinespace[1pt]
Qwen2.5-VL   & none      & 299 & $\mathbf{91}$  & 39 & 112 & $9.6$ \\
InternVL3    & none      & 294 & $\mathbf{96}$  & 48 & 103 & $8.9$ \\
LLaVA-OV     & none      & 195 & $\mathbf{83}$  & 40 & 223 & $7.9$ \\
\midrule
Molmo2-O     & layerwise & 317 & $\mathbf{126}$ & 25 & 73  & $\mathbf{18.7}$ \\
\bottomrule
\end{tabular}
\end{adjustbox}
\caption{Per-prompt regression breakdown on the $541$-prompt IFEval set, partitioning prompts by reference-LLM and VLM-LM pass/fail. Net $=$ $(\text{Lost}-\text{Gained})/541$.}
\label{tab:regression}
\end{table}

\paragraph{Modality attribution of the loss.}
A natural question is whether the observed gap reflects further optimization in general rather than VL training specifically.
We first compare two Qwen2.5-family endpoints against the same Qwen2.5-7B-Instruct text-only reference: the text-only Qwen2.5-7B-Instruct-1M variant and Qwen2.5-VL-7B-Instruct.
Under the same instruct protocol (0-shot with the chat template applied, Appendix~\ref{sec:protocol}), the text-only variant matches the reference on IFEval prompt-strict while the VL endpoint shows a $9.6$-pt gap; across the four format-sensitive tasks the text-only variant does not regress on IFEval or GSM8K and loses at most $0.42\times$ the VL gap on EQ-Bench and GPQA (Table~\ref{tab:lm_sft_control}; details in Appendix~\ref{app:lm_sft}).
The same comparison also indicates why the VL endpoint perturbs the sink more strongly.
Across every layer of the late-layer window, its displacement
from the reference is larger in operator norm than that of the
text-only endpoint, with a median ratio of $4.44\times$ on the
query projection and $3.65\times$ on the key projection.
The sink-to-other input contrast is also $\approx1.12\times$
larger once vision tokens are present.
This endpoint contrast supports modality as an important source of the gap rather than further optimization alone, with a separately matched training-time control provided in Section~\ref{sec:controls}.
This frames the predictor question: which candidate LLM backbones are most vulnerable to text-side degradation under VL adaptation?

\begin{table}[t]
\centering
\setlength{\tabcolsep}{4pt}
\fontsize{8}{10}\selectfont
\begin{tabular}{@{}l r r@{}}
\toprule
Bench & Text-only $\Delta$ & VL $\Delta$ \\
\midrule
IFEval prompt-strict     & $\phantom{-}0.00$ & $\mathbf{-9.6}$ \\
GSM8K-CoT (flex)         & $+5.46$           & $\mathbf{-14.3}$ \\
GPQA-Diamond-CoT (flex)  & $-4.04$           & $\mathbf{-9.8}$ \\
EQ-Bench v2.1            & $-2.24$           & $\mathbf{-11.2}$ \\
\bottomrule
\end{tabular}
\caption{Endpoint modality comparison against the same Qwen2.5-7B-Instruct text-only reference: the text-only endpoint is Qwen2.5-7B-Instruct-1M, while the VL endpoint is Qwen2.5-VL-7B-Instruct.
}
\label{tab:lm_sft_control}
\end{table}

\subsection{Sink Strength}
\label{sec:sink_strength}

Sink Strength $S$, measured on each text-only reference LLM, captures the per-pair capability-gap split, whose clustering further aligns with per-head $\qknorm$ in the reference LLM.

\paragraph{Definition.}
We capture per-head sink concentration on the base LLM as a stand-alone metric, the \emph{Sink Strength}:
\[
S \;:=\; \mathrm{median}_{(\ell, h, q)} \;\log\frac{a_{p_{\mathrm{sink}}}^{(\ell,h,q)}}{1 - a_{p_{\mathrm{sink}}}^{(\ell,h,q)}},
\]
where $a_{p_{\mathrm{sink}}}^{(\ell, h, q)}$ is the attention probability at the per-head argmax key position $p_{\mathrm{sink}}$ at layer $\ell$, head $h$, query position $q$, and the median is taken over the last $\sim\!10$ layers (design ablations in Appendix~\ref{app:sink_design_ablations}).
$S$ is computed entirely on the base LLM (no VLM forward pass, no VL training) with $15$ inference-only forward passes on calibration prompts disjoint from the IFEval test set.
For a 7B backbone, this takes $\sim\!8$ seconds on a single A6000 GPU.
Intuitively, $S$ quantifies how concentrated the base LLM's attention is on its sink position: high $S$ means a strong, well-formed sink, while $S \le 0$ means the sink does not dominate the aggregate non-sink attention mass.
Section~\ref{sec:method} formalizes why this scalar captures vulnerability to VL-induced sink corruption via a sink-anchored perturbation bound.

\paragraph{Analysis.}
Color in Figure~\ref{fig:damage_bars} indicates per-pair $S$ measured on the reference LLM, and the same backbones that lose more on IFEval have smaller $S$.
$S$ separates the six pairs into non-overlapping ranges along both axes: the two lower-loss pairs share $S=+2.36$ (sharing the Qwen3-8B reference), the three higher-loss pairs fall in $S \in [+1.18, +1.44]$, and the Molmo2-O boundary case sits at $S \!=\! +0.24$ (Figure~\ref{fig:predictor}).
The ordering induced by $S$ strongly tracks the IFEval-gap ordering across the six pairs.

\begin{figure}[t]
\centering
\includegraphics[width=0.95\columnwidth]{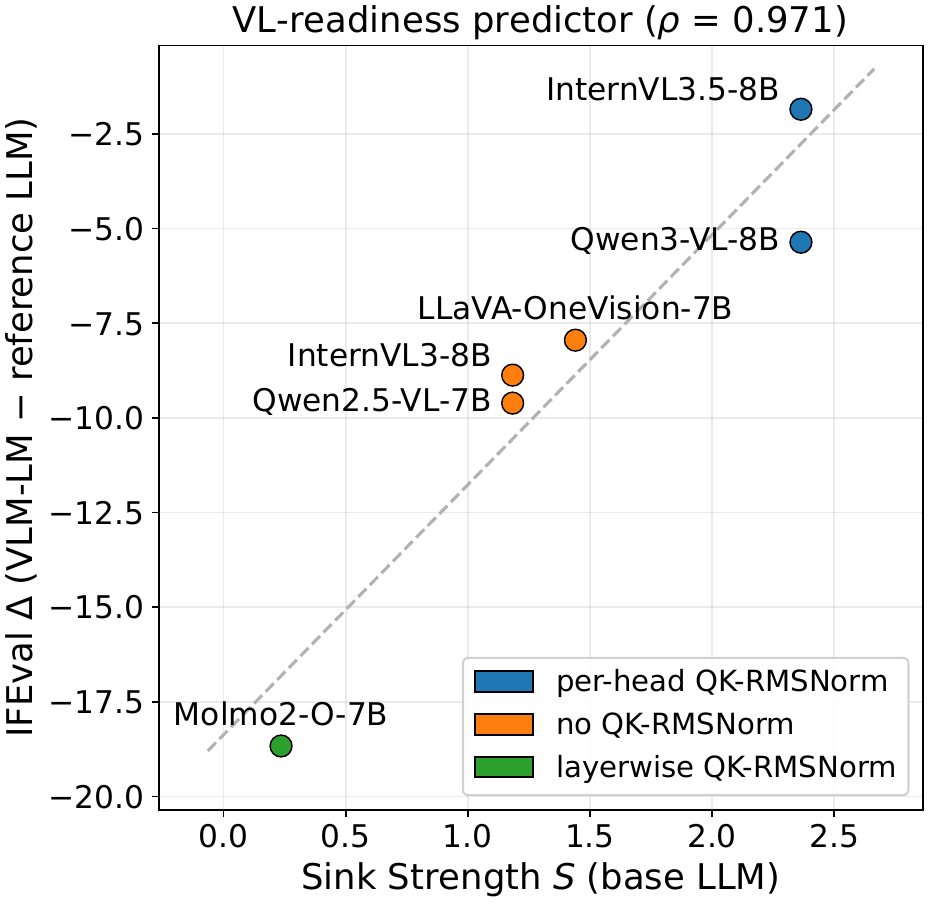}
\caption{Sink Strength $S$ on the reference LLM versus the VLM--LLM IFEval gap. Color denotes the $\qknorm$ configuration. The dashed line is a global linear fit for visualization; reported prediction errors use leave-one-pair-out fitting.}
\label{fig:predictor}
\end{figure}

\paragraph{From sink collapse to format-sensitive loss.}
Low $S$ in a candidate base LLM indicates a sink that fails to dominate non-informative positions: a sink collapse waiting to surface under VL fine-tuning.
Format-sensitive tasks such as instruction following, score-block extraction, and CoT answer parsing grade strict, parseable rules over surface tokens that require stable per-position attention; the early-position sink anchors this stability~\citep{xiao2024efficient,barbero2025llms}, so sink collapse breaks surface-token tracking and flips format-checker outcomes.
Knowledge tasks, which rely on facts encoded in MLP weights~\citep{geva2021transformer}, do not depend on the same attention-pattern stability and are correspondingly preserved (Appendix~\ref{app:knowledge_coverage}).
The same holds on a broader capability panel spanning coding, long-context retrieval, and advanced reasoning, where the mean degradation remains substantially smaller than on the format-sensitive tasks (Appendix~\ref{app:extended_bench}).

\paragraph{Architectural correlate.}
The $S$ clusters align with per-head $\qknorm$ presence in the reference LLM, with finer resolution: per-head $\qknorm$ pairs occupy the high-$S$ range, no-$\qknorm$ pairs the mid-$S$ range, and Molmo2-O is a boundary case where $\qknorm$ is present but applied layerwise.
A coarse "has any $\qknorm$ module" classifier would flag Molmo2-O as safe; $S$ correctly identifies it as the highest-risk backbone among the six pairs, matching the largest measured drop in that set ($-18.7$ pt on IFEval).
Because $S$ is measured on the base LLM alone, this distinction is available \emph{before} any VL training is committed, framing per-head sink preservation as a \emph{pre-training architectural-design concern}.
Section~\ref{sec:method_bound} gives the mechanism: per-head normalization removes the raw-magnitude dependence from the perturbation bound, while layerwise normalization leaves each head's scale tied to its share of the layer's energy (Remark~\ref{rem:layerwise}).
Section~\ref{sec:method_perhead} confirms this empirically on Molmo2-O.
Within pairs sharing the same reference LLM, and hence the same $S$, the drop still varies by up to $3.5$ pt (Qwen3-VL vs.\ InternVL3.5), which $S$ cannot resolve and we attribute to VL recipe variation.

\paragraph{Empirical validation.}
We use $S$ in two modes: rank ordering across backbones requires no outcome calibration, while magnitude prediction adds a single 1-D linear fit calibrated on known post-VL outcomes.
Across the six pairs of Table~\ref{tab:pair_metadata}, $S$ strongly tracks the VLM--LLM IFEval gap (Spearman $\rho = 0.97$); under leave-one-pair-out validation, the average held-out prediction error is $2.54$ pt on a $17$-pt outcome range (best $-1.8$ pt to worst $-18.7$ pt), against $4.40$ pt for a constant-mean baseline (per-pair detail in Appendix~\ref{app:predictor_detail}).
With six datapoints this is a sanity check on the linear relation rather than prospective evidence on unseen backbones.
The one large headline residual is Qwen3-VL ($4.4$ pt held-out): it shares $S = 2.36$ with InternVL3.5 but the two differ by 3.5 pt in the observed gap, and no single-feature fit can separate backbones with identical $S$, which sets the resolution floor of the predictor.

\subsection{Generalization and Practical Use}

\paragraph{Generalization across format-sensitive tasks.}
The same $S$, measured once on the reference LLM, also ranks the backbones on each of the four format-sensitive tasks (Table~\ref{tab:predictor_cross}).
Across six pairs, Spearman $\rho \in [0.88, 0.97]$ on every task.
On the five headline pairs (excluding Molmo2-O), the average held-out error stays under $2$ pt on each task.
$S$ generalizes as a rank predictor across tasks without re-measurement; per-task magnitude prediction additionally requires a per-task linear fit but reuses the same reference-LLM $S$.

\paragraph{Predictor extension.}
We extend the predictor along two complementary axes.
At fixed IFEval, a 17-pair panel spanning dense and MoE architectures, four reference-LLM families plus OLMoE, and 1.5B--32B scale achieves Spearman $\rho=0.88$ (Appendix~\ref{app:predictor_17pair}).
On a 9-pair subset evaluated across all four format-sensitive tasks, the rank correlation remains $\rho\ge0.82$
(Appendix~\ref{app:predictor_extended}).

\begin{table}[t]
\centering
\setlength{\tabcolsep}{3pt}
\fontsize{8}{10}\selectfont
\begin{adjustbox}{max width=\columnwidth}
\begin{tabular}{@{}l c c c c@{}}
\toprule
 & IFEval & EQ-Bench & GSM8K & GPQA-D \\
\midrule
Spearman $\rho$, n$=$6        & $0.971$ & $0.971$ & $0.971$  & $0.883$ \\
Error, 6 pairs (pt)           & $2.54$  & $19.09$ & $7.91$   & $1.96$  \\
Error, 5 headline pairs (pt)  & $1.63$  & $0.95$  & $1.03$   & $1.69$  \\
\bottomrule
\end{tabular}
\end{adjustbox}
\caption{Sink Strength $S$ as a cross-task predictor on the four format-sensitive tasks (held-out 1-D linear regression). The 5-headline row excludes Molmo2-O, the boundary case where the linear extrapolation breaks down (task-specific Molmo collapse, e.g., EQ-Bench $-64.6$); rank correlation holds on all six pairs.}
\label{tab:predictor_cross}
\end{table}

\paragraph{Practical use.}
Given a candidate base LLM, $S$ can be measured with 15 calibration
prompts in seconds on a single GPU and used to rank expected post-VL IFEval damage.
In our panels, every pair at $S\ge2$ stays within 5.4 pt on IFEval,
whereas pairs near $S\approx1.2$ span $-8.9$ to $-17.9$ pt.

\paragraph{From diagnosis to mechanism.}
Section~\ref{sec:method} explains why a quantity measured before VL training predicts what survives it.
There we bound the logit-gap perturbation $B_{\mathrm{gap}}$ that an update of bounded size can inflict (Theorem~\ref{thm:perturb}), and show that the surviving sink mass is governed by the margin $G_{\mathrm{base}} - B_{\mathrm{gap}}$ (Lemma~\ref{lem:saturation}).
Since $S$ is the median of $G_{\mathrm{base}}$, it measures the base-side term of that margin: a high-$S$ backbone enters VL training with more sink lead than a bounded update can close.
The other term, $B_{\mathrm{gap}}$, is recipe-dependent and observable only after adaptation, which is why $S$ alone suffices to rank backbones beforehand while the full margin refines the analysis post hoc.
In our sample the base-side term alone is in fact the stronger signal, with $S$ tracking the IFEval gap better than the full margin ($\rho = 0.97$ vs.\ $0.89$), so we use $S$ as a leading-order, low-cost proxy for it.

\section{Mechanism}
\label{sec:method}

A sink survives as long as its attention logit stays clearly ahead of the logits at the other key positions.
This section makes that lead quantitative and bounds how much of it a fine-tuning update of a given size can close.

\subsection{Setup and Notation}
\label{sec:method_setup}

\paragraph{Setup.}
Fix a single attention head (hidden size $d$, head dim $d_h$; the layer has $H$ heads), a single layer, and a single query token, with base-LLM hidden states $\{\xi_q, \xi_p\}_p$ of the surrounding tokens, where $\xi_q = \gamma_{\mathrm{in}} \odot \rmsnorm(x)$ for input-layernorm scale $\gamma_{\mathrm{in}}$ and $\rmsnorm$ denotes root-mean-square normalization.
The attention logit at key position $p$ is
\begin{align}
l_p &= \frac{Q \cdot K_p}{\sqrt{d_h}}, \nonumber \\
Q &= W_q \xi_q, \qquad K_p = W_k \xi_p .
\end{align}
Under $\qknorm$ each projection is divided by its own RMS before the scale is applied,
\begin{equation}
Q = \gamma_q \odot \frac{Q_{\mathrm{pre}}}{\mathrm{RMS}(Q_{\mathrm{pre}})} ,
\end{equation}
where $Q_{\mathrm{pre}} = W_q \xi_q$, and analogously for $K_p$.
Here $\|\gamma_*\|_\infty := \max_d |\gamma_*[d]|$ denotes the channelwise $\ell_\infty$ norm of a $\qknorm$ scale vector.
Let $l_p' = l_p + \delta_p$ be the post-perturbation logit, and let $p_{\mathrm{sink}}$ be the base LLM's per-head sink position (argmax key on the unperturbed forward pass).
Two quantities anchored at $p_{\mathrm{sink}}$ drive the bound,
\begin{align}
G_{\mathrm{base}} &:= \log \frac{a_{p_{\mathrm{sink}}}}{1 - a_{p_{\mathrm{sink}}}}, \nonumber \\
B_{\mathrm{gap}} &:= \sup_{p \ne p_{\mathrm{sink}}} |\delta_p - \delta_{p_{\mathrm{sink}}}| ,
\end{align}
the aggregate sink gap on the unperturbed logits and the sink-anchored gap perturbation over the fixed key positions of the calibration prompt.

\paragraph{Perturbation model.}
\begin{assumption}[Perturbation model]
\label{ass:perturb}
At the layer of interest, fine-tuning perturbs only the query and key projections, by $\Delta W_*$ with $\opnorm{\Delta W_*} \le \varepsilon$, and leaves the $\qknorm$ scales $\gamma_q, \gamma_k$ and the input-layernorm scale $\gamma_{\mathrm{in}}$ fixed.
We hold the base hidden states $\{\xi_q, \xi_p\}$ at their LLM baseline values, so the bound isolates the effect of the projection drift at that layer.
\end{assumption}
\noindent Both parts have a plain reading.
The operator-norm cap $\varepsilon$ limits how far fine-tuning can move either projection.
Holding the $\gamma$'s fixed is a close approximation for the Qwen3-VL pair, where the relative-norm drift $\|\Delta\gamma\|/\|\gamma\|$ is at most 2.9\% across the 36 q-norm and k-norm layers.

\paragraph{Notation summary.}
We use four sink-related quantities: $G_{\mathrm{base}}$ (per-head log-odds of sink attention, base LLM), $B_{\mathrm{gap}}$ (post-VL gap perturbation, Theorem~\ref{thm:perturb}), the margin $G_{\mathrm{base}} - B_{\mathrm{gap}}$ (controls sink survival, Lemma~\ref{lem:saturation}), and Sink Strength $S$ (Section~\ref{sec:predictor}, median of $G_{\mathrm{base}}$ over $(\ell, h, q)$).

\subsection{Perturbation Bound and Margin Saturation}
\label{sec:method_bound}

\begin{assumption}[Projection-RMS non-degeneracy]
\label{ass:rms}
Let $\mathcal{S}_q$ be the finite set of query-side inputs $\xi_q$ over the calibration prompts at the layer and head of interest, and analogously $\mathcal{S}_k$ for key-side inputs $\xi_p$ over all $p$.
Assume $\|\xi\|_2 > 0$ and $\mathrm{RMS}(W_* \xi) > 0$ for every $\xi$ in the relevant set, and define $m_q := \min_{\xi \in \mathcal{S}_q} \mathrm{RMS}(W_q \xi)/\|\xi\|_2$ and $m_k := \min_{\xi \in \mathcal{S}_k} \mathrm{RMS}(W_k \xi)/\|\xi\|_2$.
Then $m_q, m_k > 0$ in the finite calibration set we measure.
\end{assumption}

\paragraph{Intuition.}
The sink margin is threatened when a small $\Delta W$ produces a large logit shift $\delta_p$, and the three regimes differ in how the input $\xi$ enters.
Without $\qknorm$, $\delta_p \sim \Delta W \cdot \xi$ carries two input factors.
The first is the query magnitude $\|\xi_q\|$, present at any query position whether or not the sink falls there.
The second is the key-side magnitude term, which includes the sink input $\|\xi_{p_{\rm sink}}\|$ and can therefore be amplified when the sink carries a massive-activation channel~\citep{sun2024massive}.
Per-head $\qknorm$ divides $Q$ and $K_p$ by their own RMS, which absorbs $\|\xi\|$, so the bound depends only on the frozen $\|\gamma\|_\infty$ and the projection-RMS lower bounds, removing the explicit dependence on raw input magnitude.
Layerwise $\qknorm$ also absorbs $\|\xi\|$, through a single RMS over $Q_{\mathrm{cat}} = [Q_1; \cdots; Q_H]$, but the scale it leaves behind depends on each head's share of the layer's energy, so a weak head is compressed rather than normalized.

\begin{theorem}[Logit-gap perturbation]
\label{thm:perturb}
Under Assumptions~\ref{ass:perturb} and~\ref{ass:rms}, in the regime where the first-order Taylor expansion of $\tilde q, \tilde k$ near $W_q, W_k$ is valid (Appendix~\ref{app:proof_thmA}), we have, with an absolute constant $c$ and $O(\varepsilon^2)$ constants depending on the fixed weights $W_q, W_k$, scales $\gamma_q, \gamma_k, \gamma_{\mathrm{in}}$, calibration inputs $\{\xi_q, \xi_p\}$, and RMS lower bounds $m_q, m_k$,
\noindent\emph{No $\qknorm$:}
\begin{align}
B_{\mathrm{gap}} \le \,&
c\,\varepsilon\,d_h^{-1/2}\|\xi_q\|\,(\opnorm{W_q}\!+\!\opnorm{W_k}) \nonumber \\
&\cdot \sup_{p\ne p_{\mathrm{sink}}}
\left(
\|\xi_{p_{\mathrm{sink}}}\|+\|\xi_p\|
\right)
+ O(\varepsilon^2).
\label{eq:thm_perturb_no}
\end{align}

\noindent\emph{With $\qknorm$:}
\begin{align}
B_{\mathrm{gap}} \le \,&
c\,\varepsilon\,\|\gamma_q\|_\infty\|\gamma_k\|_\infty\,
(m_q^{-1}\!+\!m_k^{-1}) \nonumber \\
&+ O(\varepsilon^2).
\label{eq:thm_perturb_yes}
\end{align}
\end{theorem}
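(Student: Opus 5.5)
The plan is to write the logit shift $\delta_p$ as a first-order expansion in $(\Delta W_q,\Delta W_k)$, bound each first-order term by operator norms, and then take the sink-anchored difference.

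\emph{No $\qknorm$.} We have $l_p' = (W_q+\Delta W_q)\xi_q\cdot(W_k+\Delta W_k)\xi_p/\sqrt{d_h}$. Hence
\[
\delta_p = d_h^{-1/2}\bigl[(\Delta W_q\xi_q)\cdot(W_k\xi_p) + (W_q\xi_q)\cdot(\Delta W_k\xi_p)\bigr] + d_h^{-1/2}(\Delta W_q\xi_q)\cdot(\Delta W_k\xi_p).
\]
The last term is exactly quadratic and is bounded by $\varepsilon^2\|\xi_q\|\|\xi_p\|/\sqrt{d_h}$, which goes into the $O(\varepsilon^2)$ remainder. By Cauchy--Schwarz and Assumption~\ref{ass:perturb}, the linear part satisfies $|\delta_p^{(1)}|\le \varepsilon d_h^{-1/2}\|\xi_q\|\|\xi_p\|(\opnorm{W_k}+\opnorm{W_q})$. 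The triangle inequality $|\delta_p-\delta_{p_{\rm sink}}|\le|\delta_p|+|\delta_{p_{\rm sink}}|$ then yields the factor $\|\xi_{p_{\rm sink}}\|+\|\xi_p\|$. Taking the supremum over $p\neq p_{\rm sink}$ gives \eqref{eq:thm_perturb_no} with $c=1$. Since the $\xi$'s are frozen (Assumption~\ref{ass:perturb}), nothing else moves.

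\emph{With $\qknorm$.} Define $N(v)=v/\mathrm{RMS}(v)$, so that $Q=\gamma_q\odot N(W_q\xi_q)$ and similarly for $K_p$. The key step is the Jacobian of $N$ at $v$:
\[
J_N(v)=\frac{1}{\mathrm{RMS}(v)}\Bigl(I-\frac{vv^\top}{\|v\|^2}\Bigr).
\]
This is a scaled orthogonal projector, so $\opnorm{J_N(v)}\le 1/\mathrm{RMS}(v)$. The first-order change in $Q$ is therefore bounded by
\[
\|\gamma_q\|_\infty\,\varepsilon\|\xi_q\|/\mathrm{RMS}(W_q\xi_q)\le \varepsilon\|\gamma_q\|_\infty m_q^{-1},
\]
using Assumption~\ref{ass:rms}. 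The raw magnitude $\|\xi_q\|$ cancels here, and this cancellation is the whole point of the bound. The key side gives $\varepsilon\|\gamma_k\|_\infty m_k^{-1}$ in the same way.

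To close the bound, pair each perturbed factor with the unperturbed other factor. We have $\|K_p\|=\|\gamma_k\odot N(\cdot)\|\le\|\gamma_k\|_\infty\sqrt{d_h}$, since $\|N(v)\|_2=\sqrt{d_h}$, and likewise $\|Q\|\le\|\gamma_q\|_\infty\sqrt{d_h}$. This factor $\sqrt{d_h}$ cancels the $1/\sqrt{d_h}$ in the logit, leaving $|\delta_p^{(1)}|\le\varepsilon\|\gamma_q\|_\infty\|\gamma_k\|_\infty(m_q^{-1}+m_k^{-1})$. The triangle inequality at $p$ and $p_{\rm sink}$ then contributes a factor $2$, absorbed into $c$.

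\emph{Remainder, the main obstacle.} The step I expect to require care is justifying the $O(\varepsilon^2)$ remainder uniformly. For the no-norm case it is exact. For the normalized case, $N$ is smooth only away from $v=0$, so I would use Taylor's theorem with the second derivative of $N$ bounded along the segment $W\xi+t\Delta W\xi$, $t\in[0,1]$. The RMS along the segment stays above $(m\|\xi\|-\varepsilon\|\xi\|/\sqrt{d_h})$, which is positive for $\varepsilon$ small; this is the "regime where the Taylor expansion is valid." Since $\opnorm{D^2N(v)}=O(1/\mathrm{RMS}(v)^2)$, the remainder is $O(\varepsilon^2 m^{-2})$ times $\gamma$-dependent constants. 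The calibration set is finite, so these constants are uniform, and they depend only on the listed quantities, as claimed.
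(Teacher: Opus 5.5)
Your proposal is correct and follows essentially the paper's route: a first-order expansion of the bilinear logit, operator-norm and Cauchy--Schwarz bounds on each side, cancellation of $\|\xi_q\|$ against $\mathrm{RMS}(W_q\xi_q)\ge m_q\|\xi_q\|$, and $\|\tilde k_p\|_2=\sqrt{d_h}$ cancelling the $d_h^{-1/2}$, with the sink/non-sink factor $2$ absorbed into $c$. The differences are cosmetic: you use the projector Jacobian of $v\mapsto v/\mathrm{RMS}(v)$ where the paper uses a quotient-rule split (giving constant $1$ instead of $c_1$); bounding $|\delta_p|$ and $|\delta_{p_{\mathrm{sink}}}|$ separately gives up the sharper $\|\xi_{p_{\mathrm{sink}}}-\xi_p\|$ form that the paper uses in its RoPE-free sanity check; the orthogonal RoPE rotations you omit pass through your norm bounds unchanged; and your control of the $O(\varepsilon^2)$ remainder along the segment is more explicit than the paper's.
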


The two bounds make this precise, with explicit raw-input-magnitude factors surviving only in the no-$\qknorm$ case.

\begin{remark}[Layerwise $\qknorm$ extension; semi-formal]
\label{rem:layerwise}
Here the $H$ heads share a single $\mathrm{RMS}$ denominator over the concatenated $H \cdot d_h$-dimensional $Q_{\mathrm{cat}}$, with the layer's $\gamma_q$ applied afterwards.
Extracting head $h$ from $\tilde Q_{\mathrm{cat}} = \gamma_q \odot Q_{\mathrm{cat}} / \mathrm{RMS}(Q_{\mathrm{cat}})$ gives
\begin{equation}
\|\tilde Q_h\| \;\le\; \|\gamma_q^{(h)}\|_\infty \|Q_h\| \frac{\sqrt{H d_h}}{\|Q_{\mathrm{cat}}\|} ,
\end{equation}
which replaces the head-energy-independent envelope
$\|\gamma_q^{(h)}\|_\infty\sqrt{d_h}$ that per-head normalization provides; the same applies to keys.
Writing $\rho_h := \|Q_h\|/\|Q_{\mathrm{cat}}\|$ for the
head's share of layer energy, the two agree when $\rho_h = 1/\sqrt{H}$, while a head carrying less than its share
is compressed and its attention logits, including the sink's lead,
are attenuated with it.
The margin can therefore fail from the
$G_{\mathrm{base}}$ side rather than through a larger
$B_{\mathrm{gap}}$, and Molmo2-O is the empirical analogue with the weakest per-head $G_{\mathrm{base}}$ among the calibrated pairs of Table~\ref{tab:G_Bgap} (full derivation in Appendix~\ref{app:layerwise_deriv}; empirical control in Appendix~\ref{app:molmo}).
\end{remark}

\begin{lemma}[Saturation transfer]
\label{lem:saturation}
For sink position $p_{\mathrm{sink}}$ with pre-perturbation aggregate gap $G_{\mathrm{base}}$ and gap perturbation $B_{\mathrm{gap}}$,
\begin{equation}
1 - a'_{p_{\mathrm{sink}}} \;\le\;
\sigma\bigl(B_{\mathrm{gap}} - G_{\mathrm{base}}\bigr),
\label{eq:lem_sat}
\end{equation}
where $\sigma(x) = e^x / (1 + e^x)$ is the logistic function.
\end{lemma}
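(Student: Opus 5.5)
The plan is to work directly with the softmax and never pass through Theorem~\ref{thm:perturb}, since the lemma is a purely algebraic statement about how a uniform bound on relative logit shifts propagates to the sink's attention probability. Write $a_p = e^{l_p}/Z$ and $a'_p = e^{l'_p}/Z'$ with $l'_p = l_p + \delta_p$. The central object is the post-perturbation odds ratio:
\[
\frac{1-a'_{p_{\mathrm{sink}}}}{a'_{p_{\mathrm{sink}}}} \;=\; \sum_{p\ne p_{\mathrm{sink}}} e^{l'_p - l'_{p_{\mathrm{sink}}}} \;=\; \sum_{p\ne p_{\mathrm{sink}}} e^{(l_p - l_{p_{\mathrm{sink}}}) + (\delta_p - \delta_{p_{\mathrm{sink}}})}.
\]
The normalizer $Z'$ cancels in this ratio, which is why the logit-gap perturbation $B_{\mathrm{gap}}$, and not the absolute shifts, is the right quantity.

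Next, by the definition of $B_{\mathrm{gap}}$ as a supremum over $p\ne p_{\mathrm{sink}}$, each term satisfies $\delta_p - \delta_{p_{\mathrm{sink}}} \le B_{\mathrm{gap}}$. Factoring out $e^{B_{\mathrm{gap}}}$ then bounds the sum by
\[
e^{B_{\mathrm{gap}}}\sum_{p\ne p_{\mathrm{sink}}} e^{l_p - l_{p_{\mathrm{sink}}}} \;=\; e^{B_{\mathrm{gap}}}\,\frac{1-a_{p_{\mathrm{sink}}}}{a_{p_{\mathrm{sink}}}} \;=\; e^{B_{\mathrm{gap}} - G_{\mathrm{base}}},
\]
using the definition of $G_{\mathrm{base}}$ as the base log-odds. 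So the perturbed odds against the sink are at most $e^{B_{\mathrm{gap}}-G_{\mathrm{base}}}$.

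Finally, convert odds back to probability. The map $u \mapsto u/(1+u)$ is increasing on $u\ge 0$, and $1-a' = u/(1+u)$ when $u = (1-a')/a'$. Therefore
\[
1-a'_{p_{\mathrm{sink}}} \;\le\; \frac{e^{B_{\mathrm{gap}}-G_{\mathrm{base}}}}{1+e^{B_{\mathrm{gap}}-G_{\mathrm{base}}}} \;=\; \sigma\bigl(B_{\mathrm{gap}}-G_{\mathrm{base}}\bigr),
\]
which is the claimed inequality.

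There is no real obstacle; the only points needing care are the following.
\begin{itemize}
\item The bound must hold termwise, so I should use the one-sided consequence of the absolute-value supremum.
\item The sink position must be held fixed at the base argmax rather than re-chosen after perturbation. The statement defines it that way.
\item Edge cases need a remark. If $a_{p_{\mathrm{sink}}}\in\{0,1\}$, the log-odds are infinite and the inequality holds trivially or by continuity. Since softmax probabilities are strictly positive with finitely many keys, this does not arise.
\item No Taylor approximation is needed here. The lemma is exact, and the first-order approximations enter only through the value of $B_{\mathrm{gap}}$ supplied by Theorem~\ref{thm:perturb}.
\end{itemize}
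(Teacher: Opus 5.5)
Your proposal is correct and follows the paper's proof essentially step for step: the normalizer cancels in the post-perturbation odds $\frac{1-a'_{p_{\mathrm{sink}}}}{a'_{p_{\mathrm{sink}}}}$, you bound each term by $e^{B_{\mathrm{gap}}}$ to get $e^{B_{\mathrm{gap}}-G_{\mathrm{base}}}$, and then you convert the odds bound into the logistic bound. Your identity $\frac{1-a_{p_{\mathrm{sink}}}}{a_{p_{\mathrm{sink}}}}=\sum_{p\ne p_{\mathrm{sink}}}e^{l_p-l_{p_{\mathrm{sink}}}}=e^{-G_{\mathrm{base}}}$ is exactly the appendix's equivalent definition $G=l_{p_{\mathrm{sink}}}-\log\sum_{p\ne p_{\mathrm{sink}}}e^{l_p}$.
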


The margin $G_{\mathrm{base}} - B_{\mathrm{gap}}$ controls an upper bound on the non-sink mass through $\sigma(B_{\mathrm{gap}} - G_{\mathrm{base}})$.
Preserving the sink at level $1 - \eta$ requires $G_{\mathrm{base}} - B_{\mathrm{gap}} \ge \log((1-\eta)/\eta)$, and negative margins are consistent with the observed sink collapse.

\subsection{Empirical Sink Behavior}
\label{sec:method_empirical}

We calibrate the margin quantities on the seven pairs, measure the cross-pair sink-mass distribution, and use the layerwise controls to distinguish aggregate from per-head sink.

\paragraph{Calibration across the seven pairs.}
We compute $(G_{\mathrm{base}}, B_{\mathrm{gap}})$ across all five headline pairs and the two layerwise controls on a $15$-prompt calibration set, aggregating over (layer, head, query position) in the last $\sim\!10$ layers where sink behavior is established beyond the first few local-context layers~\citep{xiao2024efficient}.
$G_{\mathrm{base}}$ is computed on the reference LLM at the per-head argmax key $p_{\mathrm{sink}}$, and $B_{\mathrm{gap}}$ is reconstructed from the change in attention-probability ratios between the reference LLM and the VLM-LM at the same $p_{\mathrm{sink}}$.
The empirical $B_{\mathrm{gap}}$ is the realized reference-to-VLM-LM gap change and thus includes hidden-state and other parameter drift; it enters the margin of Lemma~\ref{lem:saturation} but is not a direct numerical evaluation of Theorem~\ref{thm:perturb}'s projection-only bound.
For released pairs, the reference LLM is a comparison point rather than necessarily the immediate pre-VL checkpoint.
Within each Qwen generation, the two pairs share the same reference LLM and therefore the same $G_{\mathrm{base}}$; per-pair values are reported in Table~\ref{tab:G_Bgap}.
The marginal proxy $\mathrm{median}(G)-\mathrm{median}(B)$ is near zero on both per-head $\qknorm$ pairs and strongly negative on all three no-$\qknorm$ pairs.
Both layerwise $\qknorm$ pairs land at the negative end of the sample, with Molmo2-O driven by the weakest base sink in the set.
The ordering tracks the behavioral split of Figure~\ref{fig:damage_bars}.
Calibration sample-size sensitivity is reported in Appendix~\ref{app:calib_sweep}.

\begin{table}[t]
\centering
\setlength{\tabcolsep}{8pt}
\fontsize{8}{10}\selectfont
\begin{tabular}{@{}l l c c c@{}}
\toprule
Pair & QK & $G_{\mathrm{base}}$ & $B_{\mathrm{gap}}$ & $G\!-\!B$ \\
\midrule
Qwen3-VL     & per-head  & $+2.36$ & $+2.06$ & $\mathbf{+0.31}$ \\
InternVL3.5  & per-head  & $+2.36$ & $+2.47$ & $\mathbf{-0.11}$ \\
\addlinespace[1pt]
Qwen2.5-VL   & none      & $+1.18$ & $+5.01$ & $\mathbf{-3.83}$ \\
InternVL3    & none      & $+1.18$ & $+3.40$ & $\mathbf{-2.22}$ \\
LLaVA-OV     & none      & $+1.44$ & $+2.53$ & $\mathbf{-1.09}$ \\
\midrule
Molmo2-O     & layerwise & $+0.24$ & $+3.25$ & $\mathbf{-3.01}$ \\
MolmoE-1B    & layerwise & $+1.34$ & $+4.19$ & $\mathbf{-2.84}$ \\
\bottomrule
\end{tabular}
\caption{Per-pair $G_{\mathrm{base}}$, $B_{\mathrm{gap}}$, and the marginal proxy $G - B := \mathrm{median}(G_{\mathrm{base}}) - \mathrm{median}(B_{\mathrm{gap}})$ of the Lemma~\ref{lem:saturation} margin, on a $15$-prompt calibration set.}\label{tab:G_Bgap}
\end{table}

\paragraph{Cross-pair sink and outcome bucketing.}
We partition each VLM-LM's IFEval prompts by per-prompt strict accuracy (pass at $\ge 0.5$).
For each bucket we measure position-$0$ attention, averaging over late-layer $\times$ head $\times$ trailing-query positions with bootstrap $95\%$ CIs over $100$ prompts per outcome bucket (Appendix~\ref{app:sink_outcome}).
The within-pair pass$-$fail gap is small ($\le 0.02$ per-prompt mean), but the cross-pair span is large.
The two per-head $\qknorm$ pairs concentrate $\ge\!0.70$ of attention on position $0$, while the no-$\qknorm$ pairs span $5\!\times\!10^{-5}$ to $0.50$ and Molmo2-O (layerwise) sits at $0.25$.
Within the same OpenGVLab InternVL lineage, InternVL3 (no-$\qknorm$, $5\!\times\!10^{-5}$) and InternVL3.5 (per-head $\qknorm$, $0.74$) span four orders of magnitude on this measure; the Qwen-VL family shows the same direction at smaller magnitude ($0.03 \to 0.70$ across Qwen2.5-VL $\to$ Qwen3-VL).
Within-vendor framing reduces vendor-level confounding and strengthens the association with the architectural change, although recipe differences remain.
The within-pair near-equality is consistent with the broken-sink
state being a model-level head configuration rather than a
prompt-level outcome marker.
The cross-pair span is what tracks the behavioral split, so effective interventions should operate at the head level.

\paragraph{Per-head vs.\ aggregate sink.}
\label{sec:method_perhead}
Molmo2-O (Olmo-3 base, layerwise $\qknorm$; Remark~\ref{rem:layerwise}) separates \emph{aggregate} from \emph{per-head} sink: aggregate position-$0$ attention ($0.25$) is of the same order as LLaVA-OV's ($0.50$) yet IFEval damage is far worse, while per-head sink is the weakest among the calibrated pairs and already weak in the Olmo-3 reference LLM
(Appendix~\ref{app:molmo}).
The marginal proxy $G - B$ is strongly negative ($-3.01$, second only to the no-$\qknorm$ Qwen2.5-VL at $-3.83$; Table~\ref{tab:G_Bgap}), so Lemma~\ref{lem:saturation}'s per-head margin protection is not delivered head-by-head.
MolmoE-1B, a second layerwise backbone on an MoE base, shows the same pattern, so both layerwise pairs carry strongly negative margins despite differing in base family and architecture.
Together, the evidence supports per-head $\qknorm$ as the protective configuration in our sample, with the layerwise variant insufficient.

\subsection{Channel and Direction Ablations}
\label{sec:method_ablation}

\paragraph{Ablating the sink amplifier.}
For each of Qwen3-VL's normalization-scale tensors
($\gamma_{\mathrm{in}}, \gamma_q, \gamma_k$), replacing the top-$10$ channels per layer by the layer-wise mean disables the amplifier without touching the residual stream or projections (Appendix~\ref{app:c1details}): joint kill costs 44 pt on IFEval against 20 pt as the sum of singles, a 24-pt super-additive interaction that supports sink amplification as a redundant multi-norm system contributing to format-sensitive capability.
For reference, a norm-matched isotropic random perturbation causes a $\sim6\times$ larger IFEval drop than the Qwen2.5-VL gap (Appendix~\ref{app:e2details}).
Thus, weight-space distance alone does not explain the degradation; perturbation direction matters.
\section{Controls and Discussion}
\label{sec:controls}

\paragraph{Within-vendor comparison.}
The same direction appears within each vendor's own model line: Qwen3-VL
vs.\ Qwen2.5-VL, and InternVL3.5 vs.\ InternVL3.
In both comparisons the $\qknorm$ backbone loses less text capability, so
the association does not require comparing models built by different teams
under different VL pipelines.
However, this is not a controlled comparison, since successive releases
differ in much more than $\qknorm$, and we report it as one alternative
explanation ruled out rather than as evidence that $\qknorm$ is the cause;
isolating the architectural axis requires the ablation we leave to future
work (See Limitations).

\paragraph{Modality control (VL vs.\ T\"ulu trajectory).}
\label{sec:modality_control}
The Section~\ref{sec:problem_setup} comparison contrasts released 7B endpoints against a common text-only reference.
We additionally run a matched 3B trajectory, comparing vision fine-tuning with text-only T\"ulu SFT from the same Qwen2.5-3B-Instruct base~\citep{lambert2024tulu}.
The vision run ends with a weaker sink ($S\!:\,+0.41\!\to\!+0.20$) as SEEDBench rises $60.1\!\to\!64.8$, whereas the text-only run holds $S$ at $+0.58$--$+0.65$ and loses only $\sim\!5$ IFEval points from the shared base (Appendix~\ref{app:vl_tulu}).
Because $S$ is a base-model property, it predicts the regime a run settles into rather than the step at which capability is lost.
IFEval takes most of its loss by step 1k, before $S$ declines, and then partially recovers while $S$ keeps falling.

\paragraph{Post-pretraining $\qknorm$ injection (negative).}
\label{sec:negative}
A natural question is whether the protection associated with native per-head $\qknorm$ can be reproduced by injecting the module after LLM pretraining.
We add per-head $\qknorm$ with unit-scale initialization to a clean Qwen2.5-3B-Instruct and then run the matched two-stage VL recipe (Appendix~\ref{app:c3details}).
The injected variant lags vanilla throughout training, scoring only 11 on IFEval at the first evaluated checkpoint against the 59.9 base.
Combined with the Molmo2-O control (Section~\ref{sec:method}), this supports native-pretraining co-adaptation of the projections, rather than the module's mere presence, as an important component of the protective regime.

\paragraph{Weight-space merging (negative).}
Merging the VLM's language backbone back toward its reference LLM is the natural post-hoc repair: seven off-the-shelf merging settings spanning linear task-vector mixes~\citep{ilharco2022editing}, spectral decompositions~\citep{gargiulo2025task,lee2025star}, and sparsified merges~\citep{yadav2023ties,yu2024language} on Qwen2.5-VL and Qwen3-VL fail to recover IFEval on the low-$S$ side: only the two mildest mixes avoid substantial additional degradation (Appendix~\ref{app:method_sweep}).

\paragraph{What the negative results do and do not show.}
Our matched T\"ulu trajectory provides a mechanistic rationale for the data-side remedy: text-only optimization preserves the sink whereas VL optimization disrupts it.
That remedy is applied uniformly and paid for in multimodal data budget, and it is chosen after the backbone is already fixed; $S$ instead identifies which backbones need it before training begins.
The two negative controls argue against simpler shortcuts:
post-pretraining $\qknorm$ injection does not reproduce native protection, and post-VL merging does not recover the lost capability under our settings.
Both leave the intervention space narrower than the remedy literature suggests.

\section{Conclusion}
\label{sec:conclusion}

Across five headline VLM--LLM pairs from three model families, format-sensitive text-capability gaps track Sink Strength of the corresponding reference LLM, with per-head $\qknorm$ as the strongest architectural correlate in our sample.
The relation extends from the six-pair diagnostic set to a 17-pair panel spanning MoE and multiple LLM families.
Negative injection and merging controls further motivate pre-VL screening and head-selective training-time protection.

\clearpage
\section*{Limitations}
\label{sec:limitations}

To our knowledge, the multimodal literature has not previously characterized the phenomenon we surface, in which VL adaptation erodes the base LLM's per-head attention sink and carries a downstream cost to format-sensitive capability.
Within our sample, per-head $\qknorm$ co-varies with vendor, recipe, and pretraining data.
Within-vendor comparisons (Qwen2.5$\rightarrow$Qwen3,
InternVL3$\rightarrow$InternVL3.5) reduce but do not isolate
the architectural axis.
We validate $S$ against VL adaptation outcomes, so whether it also predicts instruction-following drift under adaptation regimes we did not sample, such as continual pretraining, remains open and is a natural next test of the diagnostic's reach.

The post-VL merging methods we test do not recover the lost capability, while post-pretraining $\qknorm$ injection fails to reproduce the native protective regime, leaving head-selective training-time intervention as an open direction (Section~\ref{sec:controls}).
The bound of Section~\ref{sec:method} suggests one concrete candidate: freeze the layers below the first sink layer and project the sink-head $W_q, W_k$ updates away from the sink direction, which should slow how fast VL training closes the $G_{\mathrm{base}} - B_{\mathrm{gap}}$ margin.
A direct causal intervention would separately test sufficiency, for instance disabling per-head $\qknorm$ during VL training of Qwen3 to see whether IFEval collapses.
We leave both to future work.
\section*{Acknowledgments}
This work was supported by National Research Foundation of Korea (NRF) grant funded by the Korea government (MSIT) (RS-2025-24534857 and RS-2026-25522655),
Institute of Information \& Communications Technology Planning \& Evaluation(IITP)-ICT Creative Consilience Program grant funded by the Korea government (MSIT) (IITP-2026-RS-2020-II201819, 10\%),
IITP-ITRC(Information Technology Research Center) grant funded by the Korea government (MSIT) (IITP-2026-RS-2023-00260091, 10\%), and IITP grant funded by the Korea government (MSIT) (RS-2024-00398353, Development of Countermeasure Technologies for Generative AI Security Threats). We also thank the members of the Korea University Intelligent Computer Architecture \& Systems research Lab, Vision Understanding Team at NAVER Cloud AI, and the anonymous reviewers for their helpful feedback.

\bibliography{custom}

@article{zhang2024wings,
  title={{Wings: Learning Multimodal LLMs without Text-only Forgetting}},
  author={Zhang, Yi-Kai and Lu, Shiyin and Li, Yang and Ma, Yanqing and Chen, Qing-Guo and Xu, Zhao and Luo, Weihua and Zhang, Kaifu and Zhan, De-Chuan and Ye, Han-Jia},
  journal={Advances in Neural Information Processing Systems},
  volume={37},
  pages={31828--31853},
  year={2024}
}

@inproceedings{zhai2024investigating,
  title={{Investigating the Catastrophic Forgetting in Multimodal Large Language Model Fine-Tuning}},
  author={Zhai, Yuexiang and Tong, Shengbang and Li, Xiao and Cai, Mu and Qu, Qing and Lee, Yong Jae and Ma, Yi},
  booktitle={Conference on Parsimony and Learning},
  pages={202--227},
  volume={234},
  series={Proceedings of Machine Learning Research},
  publisher={PMLR},
  year={2024}
}

@inproceedings{lee2025does,
  title={{How Does Vision-Language Adaptation Impact the Safety of Vision Language Models?}},
  author={Lee, Seongyun and Kim, Geewook and Kim, Jiyeon and Lee, Hyunji and Chang, Hoyeon and Park, Sue Hyun and Seo, Minjoon},
  booktitle={International Conference on Learning Representations},
  year={2025}
}

@inproceedings{ratzlaff2025training,
  title={{Training-Free Mitigation of Language Reasoning Degradation After Multimodal Instruction Tuning}},
  author={Ratzlaff, Neale and Luo, Man and Su, Xin and Lal, Vasudev and Howard, Phillip},
  booktitle={Proceedings of the AAAI Symposium Series},
  volume={5},
  pages={384--388},
  year={2025}
}

@inproceedings{srivastava2024improving,
  title={{Improving Multimodal Large Language Models Using Continual Learning}},
  author={Srivastava, Shikhar and Harun, Md Yousuf and Shrestha, Robik Singh and Kanan, Christopher},
  booktitle={Proceedings of The 4th Conference on Lifelong Learning Agents},
  pages={736--755},
  volume={330},
  series={Proceedings of Machine Learning Research},
  publisher={PMLR},
  year={2026}
}

@inproceedings{lin2024vila,
  title={{VILA: On Pre-training for Visual Language Models}},
  author={Lin, Ji and Yin, Hongxu and Ping, Wei and Molchanov, Pavlo and Shoeybi, Mohammad and Han, Song},
  booktitle={Proceedings of the IEEE/CVF Conference on Computer Vision and Pattern Recognition (CVPR)},
  pages={26689--26699},
  year={2024}
}

@article{tong2024cambrian,
  title={{Cambrian-1: A Fully Open, Vision-Centric Exploration of Multimodal LLMs}},
  author={Tong, Shengbang and Brown, Ellis and Wu, Penghao and Woo, Sanghyun and Middepogu, Manoj and Akula, Sai Charitha and Yang, Jihan and Yang, Shusheng and Iyer, Adithya and Pan, Xichen and Wang, Austin and Fergus, Rob and LeCun, Yann and Xie, Saining},
  journal={Advances in Neural Information Processing Systems},
  volume={37},
  pages={87310--87356},
  year={2024}
}

@inproceedings{tu2025mlan,
  title={{MLAN: Language-Based Instruction Tuning Preserves and Transfers Knowledge in Multimodal Language Models}},
  author={Tu, Jianhong and Ni, Zhuohao and Crispino, Nicholas and Yu, Zihao and Bendersky, Michael and Gunel, Beliz and Jia, Ruoxi and Liu, Xin and Lyu, Lingjuan and Song, Dawn and Wang, Chenguang},
  booktitle={Proceedings of the 3rd Workshop on Towards Knowledgeable Foundation Models (KnowFM)},
  pages={59--74},
  year={2025}
}

@inproceedings{zhu2024minigpt,
  title={{MiniGPT-4: Enhancing Vision-Language Understanding with Advanced Large Language Models}},
  author={Zhu, Deyao and Chen, Jun and Shen, Xiaoqian and Li, Xiang and Elhoseiny, Mohamed},
  booktitle={International Conference on Learning Representations},
  year={2024}
}

@article{liu2023visual,
  title={{Visual Instruction Tuning}},
  author={Liu, Haotian and Li, Chunyuan and Wu, Qingyang and Lee, Yong Jae},
  journal={Advances in Neural Information Processing Systems},
  volume={36},
  pages={34892--34916},
  year={2023}
}

@inproceedings{liu2024improved,
  title={{Improved Baselines with Visual Instruction Tuning}},
  author={Liu, Haotian and Li, Chunyuan and Li, Yuheng and Lee, Yong Jae},
  booktitle={Proceedings of the IEEE/CVF Conference on Computer Vision and Pattern Recognition (CVPR)},
  pages={26296--26306},
  year={2024}
}

@inproceedings{hu2022lora,
  title={{LoRA: Low-Rank Adaptation of Large Language Models}},
  author={Hu, Edward J. and Shen, Yelong and Wallis, Phillip and Allen-Zhu, Zeyuan and Li, Yuanzhi and Wang, Shean and Wang, Lu and Chen, Weizhu},
  booktitle={International Conference on Learning Representations},
  year={2022}
}

@inproceedings{yu2025locate,
  title={{Locate-then-Merge: Neuron-Level Parameter Fusion for Mitigating Catastrophic Forgetting in Multimodal LLMs}},
  author={Yu, Zeping and Ananiadou, Sophia},
  booktitle={Findings of the Association for Computational Linguistics: EMNLP 2025},
  pages={7065--7078},
  year={2025}
}

@inproceedings{ilharco2022editing,
  title={{Editing models with task arithmetic}},
  author={Ilharco, Gabriel and Ribeiro, Marco Tulio and Wortsman, Mitchell and Gururangan, Suchin and Schmidt, Ludwig and Hajishirzi, Hannaneh and Farhadi, Ali},
  booktitle={International Conference on Learning Representations},
  year={2023}
}

@article{yadav2023ties,
  title={{TIES-Merging: Resolving Interference When Merging Models}},
  author={Yadav, Prateek and Tam, Derek and Choshen, Leshem and Raffel, Colin and Bansal, Mohit},
  journal={Advances in Neural Information Processing Systems},
  volume={36},
  pages={7093--7115},
  year={2023}
}

@inproceedings{yu2024language,
  title={{Language Models are Super Mario: Absorbing Abilities from Homologous Models as a Free Lunch}},
  author={Yu, Le and Yu, Bowen and Yu, Haiyang and Huang, Fei and Li, Yongbin},
  booktitle={Proceedings of the 41st International Conference on Machine Learning},
  pages={57755--57775},
  volume={235},
  series={Proceedings of Machine Learning Research},
  publisher={PMLR},
  year={2024}
}

@inproceedings{wortsman2022model,
  title={{Model soups: averaging weights of multiple fine-tuned models improves accuracy without increasing inference time}},
  author={Wortsman, Mitchell and Ilharco, Gabriel and Gadre, Samir Ya and Roelofs, Rebecca and Gontijo-Lopes, Raphael and Morcos, Ari S and Namkoong, Hongseok and Farhadi, Ali and Carmon, Yair and Kornblith, Simon and Schmidt, Ludwig},
  booktitle={Proceedings of the 39th International Conference on Machine Learning},
  pages={23965--23998},
  volume={162},
  series={Proceedings of Machine Learning Research},
  publisher={PMLR},
  year={2022}
}

@inproceedings{lee2025star,
  title={{STAR: Spectral Truncation and Rescale for Model Merging}},
  author={Lee, Yu-Ang and Ko, Ching-Yun and Pedapati, Tejaswini and Chung, I-Hsin and Yeh, Mi-Yen and Chen, Pin-Yu},
  booktitle={Proceedings of the 2025 Conference of the Nations of the Americas Chapter of the Association for Computational Linguistics: Human Language Technologies (Volume 2: Short Papers)},
  pages={496--505},
  year={2025}
}

@inproceedings{gargiulo2025task,
  title={{Task Singular Vectors: Reducing Task Interference in Model Merging}},
  author={Gargiulo, Antonio Andrea and Crisostomi, Donato and Bucarelli, Maria Sofia and Scardapane, Simone and Silvestri, Fabrizio and Rodol\`a, Emanuele},
  booktitle={Proceedings of the IEEE/CVF Conference on Computer Vision and Pattern Recognition (CVPR)},
  pages={18695--18705},
  year={2025}
}

@misc{eval-harness,
  author       = {Gao, Leo and Tow, Jonathan and Abbasi, Baber and Biderman, Stella and Black, Sid and DiPofi, Anthony and Foster, Charles and Golding, Laurence and Hsu, Jeffrey and Le Noac'h, Alain and Li, Haonan and McDonell, Kyle and Muennighoff, Niklas and Ociepa, Chris and Phang, Jason and Reynolds, Laria and Schoelkopf, Hailey and Skowron, Aviya and Sutawika, Lintang and Tang, Eric and Thite, Anish and Wang, Ben and Wang, Kevin and Zou, Andy},
  title        = {{The Language Model Evaluation Harness}},
  month        = 07,
  year         = 2024,
  publisher    = {Zenodo},
  version      = {v0.4.3},
  doi          = {10.5281/zenodo.12608602},
  url          = {https://zenodo.org/records/12608602}
}

@article{zhou2023instruction,
  title={{Instruction-Following Evaluation for Large Language Models}},
  author={Zhou, Jeffrey and Lu, Tianjian and Mishra, Swaroop and Brahma, Siddhartha and Basu, Sujoy and Luan, Yi and Zhou, Denny and Hou, Le},
  journal={arXiv preprint arXiv:2311.07911},
  year={2023}
}

@article{yang2024qwen2technicalreport,
  title={{Qwen2 Technical Report}},
  author={Yang, An and Yang, Baosong and Hui, Binyuan and Zheng, Bo and Yu, Bowen and Zhou, Chang and Li, Chengpeng and Li, Chengyuan and Liu, Dayiheng and Huang, Fei and others},
  journal={arXiv preprint arXiv:2407.10671},
  year={2024}
}

@article{qwen2025qwen25technicalreport,
  title={{Qwen2.5 Technical Report}},
  author={Yang, An and Yang, Baosong and Zhang, Beichen and Hui, Binyuan and Zheng, Bo and Yu, Bowen and Li, Chengyuan and Liu, Dayiheng and Huang, Fei and Wei, Haoran and others},
  journal={arXiv preprint arXiv:2412.15115},
  year={2024}
}

@article{grattafiori2024llama,
  title={{The Llama 3 Herd of Models}},
  author={Grattafiori, Aaron and Dubey, Abhimanyu and Jauhri, Abhinav and Pandey, Abhinav and Kadian, Abhishek and Al-Dahle, Ahmad and Letman, Aiesha and Mathur, Akhil and Schelten, Alan and Vaughan, Alex and others},
  journal={arXiv preprint arXiv:2407.21783},
  year={2024}
}

@article{yu2025minicpmv45,
  title={{MiniCPM-V 4.5: Cooking Efficient MLLMs via Architecture, Data, and Training Recipe}},
  author={Yu, Tianyu and Wang, Zefan and Wang, Chongyi and Huang, Fuwei and Ma, Wenshuo and He, Zhihui and Cai, Tianchi and Chen, Weize and Huang, Yuxiang and Zhao, Yuanqian and others},
  journal={arXiv preprint arXiv:2509.18154},
  year={2025}
}

@article{yang2025qwen3,
  title={{Qwen3 Technical Report}},
  author={Yang, An and Li, Anfeng and Yang, Baosong and Zhang, Beichen and Hui, Binyuan and Zheng, Bo and Yu, Bowen and Gao, Chang and Huang, Chengen and Lv, Chenxu and others},
  journal={arXiv preprint arXiv:2505.09388},
  year={2025}
}

@inproceedings{xiao2024efficient,
  title={{Efficient Streaming Language Models with Attention Sinks}},
  author={Xiao, Guangxuan and Tian, Yuandong and Chen, Beidi and Han, Song and Lewis, Mike},
  booktitle={International Conference on Learning Representations},
  year={2024}
}

@inproceedings{sun2024massive,
  title={{Massive Activations in Large Language Models}},
  author={Sun, Mingjie and Chen, Xinlei and Kolter, J. Zico and Liu, Zhuang},
  booktitle={First Conference on Language Modeling},
  year={2024}
}

@inproceedings{gu2025attention,
  title={{When Attention Sink Emerges in Language Models: An Empirical View}},
  author={Gu, Xiangming and Pang, Tianyu and Du, Chao and Liu, Qian and Zhang, Fengzhuo and Du, Cunxiao and Wang, Ye and Lin, Min},
  booktitle={International Conference on Learning Representations},
  year={2025}
}

@inproceedings{barbero2025llms,
  title={{Why do LLMs attend to the first token?}},
  author={Barbero, Federico and Arroyo, {\'A}lvaro and Gu, Xiangming and Perivolaropoulos, Christos and Bronstein, Michael and Veli{\v{c}}kovi{\'c}, Petar and Pascanu, Razvan},
  booktitle={Second Conference on Language Modeling},
  year={2025}
}

@article{zhang2025attention,
  title={{Attention Sinks: A 'Catch, Tag, Release' Mechanism for Embeddings}},
  author={Zhang, Stephen and Khan, Mustafa and Papyan, Vardan},
  journal={Advances in Neural Information Processing Systems},
  volume={38},
  pages={83140--83181},
  year={2025}
}

@inproceedings{darcet2024vision,
  title={{Vision Transformers Need Registers}},
  author={Darcet, Timoth{\'e}e and Oquab, Maxime and Mairal, Julien and Bojanowski, Piotr},
  booktitle={International Conference on Learning Representations},
  year={2024}
}

@inproceedings{henry2020query,
  title={{Query-Key Normalization for Transformers}},
  author={Henry, Alex and Dachapally, Prudhvi Raj and Pawar, Shubham Shantaram and Chen, Yuxuan},
  booktitle={Findings of the Association for Computational Linguistics: EMNLP 2020},
  pages={4246--4253},
  year={2020}
}

@inproceedings{zhai2023stabilizing,
  title={{Stabilizing Transformer Training by Preventing Attention Entropy Collapse}},
  author={Zhai, Shuangfei and Likhomanenko, Tatiana and Littwin, Etai and Busbridge, Dan and Ramapuram, Jason and Zhang, Yizhe and Gu, Jiatao and Susskind, Joshua M.},
  booktitle={Proceedings of the 40th International Conference on Machine Learning},
  pages={40770--40803},
  volume={202},
  series={Proceedings of Machine Learning Research},
  publisher={PMLR},
  year={2023}
}

@article{olsson2022context,
  title={{In-context Learning and Induction Heads}},
  author={Olsson, Catherine and Elhage, Nelson and Nanda, Neel and Joseph, Nicholas and DasSarma, Nova and Henighan, Tom and Mann, Ben and Askell, Amanda and Bai, Yuntao and Chen, Anna and others},
  journal={arXiv preprint arXiv:2209.11895},
  year={2022}
}

@inproceedings{geva2021transformer,
  title={{Transformer Feed-Forward Layers Are Key-Value Memories}},
  author={Geva, Mor and Schuster, Roei and Berant, Jonathan and Levy, Omer},
  booktitle={Proceedings of the 2021 Conference on Empirical Methods in Natural Language Processing},
  pages={5484--5495},
  year={2021}
}

@inproceedings{wang2025towards,
  title={{Towards Understanding Fine-Tuning Mechanisms of LLMs via Circuit Analysis}},
  author={Wang, Xu and Hu, Yan and Du, Wenyu and Cheng, Reynold and Wang, Benyou and Zou, Difan},
  booktitle={Proceedings of the 42nd International Conference on Machine Learning},
  pages={63088--63112},
  volume={267},
  series={Proceedings of Machine Learning Research},
  publisher={PMLR},
  year={2025}
}

@article{paech2023eq,
  title={{EQ-Bench: An Emotional Intelligence Benchmark for Large Language Models}},
  author={Paech, Samuel J.},
  journal={arXiv preprint arXiv:2312.06281},
  year={2023}
}

@article{cobbe2021training,
  title={{Training Verifiers to Solve Math Word Problems}},
  author={Cobbe, Karl and Kosaraju, Vineet and Bavarian, Mohammad and Chen, Mark and Jun, Heewoo and Kaiser, Lukasz and Plappert, Matthias and Tworek, Jerry and Hilton, Jacob and Nakano, Reiichiro and Hesse, Christopher and Schulman, John},
  journal={arXiv preprint arXiv:2110.14168},
  year={2021}
}

@inproceedings{rein2023gpqa,
  title={{GPQA: A Graduate-Level Google-Proof {Q\&A} Benchmark}},
  author={Rein, David and Hou, Betty Li and Stickland, Asa Cooper and Petty, Jackson and Pang, Richard Yuanzhe and Dirani, Julien and Michael, Julian and Bowman, Samuel R},
  booktitle={First Conference on Language Modeling},
  year={2024}
}

@inproceedings{hendrycks2020measuring,
  title={{Measuring Massive Multitask Language Understanding}},
  author={Hendrycks, Dan and Burns, Collin and Basart, Steven and Zou, Andy and Mazeika, Mantas and Song, Dawn and Steinhardt, Jacob},
  booktitle={International Conference on Learning Representations},
  year={2021}
}

@inproceedings{clark2019boolq,
  title={{BoolQ: Exploring the Surprising Difficulty of Natural Yes/No Questions}},
  author={Clark, Christopher and Lee, Kenton and Chang, Ming-Wei and Kwiatkowski, Tom and Collins, Michael and Toutanova, Kristina},
  booktitle={Proceedings of the 2019 Conference of the North American Chapter of the Association for Computational Linguistics: Human Language Technologies, Volume 1 (Long and Short Papers)},
  pages={2924--2936},
  year={2019}
}

@article{bai2025qwen3,
  title={{Qwen3-VL Technical Report}},
  author={Bai, Shuai and Cai, Yuxuan and Chen, Ruizhe and Chen, Keqin and Chen, Xionghui and Cheng, Zesen and Deng, Lianghao and Ding, Wei and Gao, Chang and Ge, Chunjiang and others},
  journal={arXiv preprint arXiv:2511.21631},
  year={2025}
}

@article{wang2025internvl3,
  title={{InternVL3.5: Advancing Open-Source Multimodal Models in Versatility, Reasoning, and Efficiency}},
  author={Wang, Weiyun and Gao, Zhangwei and Gu, Lixin and Pu, Hengjun and Cui, Long and Wei, Xingguang and Liu, Zhaoyang and Jing, Linglin and Ye, Shenglong and Shao, Jie and others},
  journal={arXiv preprint arXiv:2508.18265},
  year={2025}
}

@article{bai2025qwen25vltechnicalreport,
  title={{Qwen2.5-VL Technical Report}},
  author={Bai, Shuai and Chen, Keqin and Liu, Xuejing and Wang, Jialin and Ge, Wenbin and Song, Sibo and Dang, Kai and Wang, Peng and Wang, Shijie and Tang, Jun and others},
  journal={arXiv preprint arXiv:2502.13923},
  year={2025}
}

@article{zhu2025internvl3,
  title={{InternVL3: Exploring Advanced Training and Test-Time Recipes for Open-Source Multimodal Models}},
  author={Zhu, Jinguo and Wang, Weiyun and Chen, Zhe and Liu, Zhaoyang and Ye, Shenglong and Gu, Lixin and Tian, Hao and Duan, Yuchen and Su, Weijie and Shao, Jie and others},
  journal={arXiv preprint arXiv:2504.10479},
  year={2025}
}

@article{li2024llava,
  title={{LLaVA-OneVision: Easy Visual Task Transfer}},
  author={Li, Bo and Zhang, Yuanhan and Guo, Dong and Zhang, Renrui and Li, Feng and Zhang, Hao and Zhang, Kaichen and Zhang, Peiyuan and Li, Yanwei and Liu, Ziwei and Li, Chunyuan},
  journal={Transactions on Machine Learning Research},
  year={2025}
}

@inproceedings{deitke2025molmo,
  title={{Molmo and PixMo: Open Weights and Open Data for State-of-the-Art Vision-Language Models}},
  author={Deitke, Matt and Clark, Christopher and Lee, Sangho and Tripathi, Rohun and Yang, Yue and Park, Jae Sung and Salehi, Mohammadreza and Muennighoff, Niklas and Lo, Kyle and Soldaini, Luca and others},
  booktitle={Proceedings of the IEEE/CVF Conference on Computer Vision and Pattern Recognition (CVPR)},
  pages={91--104},
  year={2025}
}

@inproceedings{clark2026molmo2,
  title={{Molmo2: Open Weights and Data for Vision-Language Models with Video Understanding and Grounding}},
  author={Clark, Christopher and Zhang, Jieyu and Ma, Zixian and Park, Jae Sung and Tripathi, Rohun and Lee, Sangho and Salehi, Mohammadreza and Ren, Jason and Kim, Chris Dongjoo and Yang, Yinuo and others},
  booktitle={Proceedings of the IEEE/CVF Conference on Computer Vision and Pattern Recognition (CVPR)},
  pages={28652--28668},
  year={2026}
}

@article{lu2024ovis,
  title={{Ovis: Structural Embedding Alignment for Multimodal Large Language Model}},
  author={Lu, Shiyin and Li, Yang and Chen, Qing-Guo and Xu, Zhao and Luo, Weihua and Zhang, Kaifu and Ye, Han-Jia},
  journal={arXiv preprint arXiv:2405.20797},
  year={2024}
}

@inproceedings{laurenccon2024building,
  title={{Building and better understanding vision-language models: insights and future directions}},
  author={Lauren{\c{c}}on, Hugo and Marafioti, Andr{\'e}s and Sanh, Victor and Tronchon, L{\'e}o},
  booktitle={NeurIPS 2024 Workshops: RBFM},
  year={2024}
}

@inproceedings{muennighoff2025olmoe,
  title={{OLMoE: Open Mixture-of-Experts Language Models}},
  author={Muennighoff, Niklas and Soldaini, Luca and Groeneveld, Dirk and Lo, Kyle and Morrison, Jacob and Min, Sewon and Shi, Weijia and Walsh, Pete and Tafjord, Oyvind and Lambert, Nathan and others},
  booktitle={International Conference on Learning Representations},
  year={2025}
}

@misc{jiang2023mistral7b,
  title={{Mistral 7B}},
  author={Jiang, Albert Q. and Sablayrolles, Alexandre and Mensch, Arthur and Bamford, Chris and Chaplot, Devendra Singh and de las Casas, Diego and Bressand, Florian and Lengyel, Gianna and Lample, Guillaume and Saulnier, Lucile and others},
  year={2023},
  eprint={2310.06825},
  archivePrefix={arXiv},
  primaryClass={cs.CL},
  url={https://arxiv.org/abs/2310.06825}
}

@article{austin2021program,
  title={{Program Synthesis with Large Language Models}},
  author={Austin, Jacob and Odena, Augustus and Nye, Maxwell and Bosma, Maarten and Michalewski, Henryk and Dohan, David and Jiang, Ellen and Cai, Carrie and Terry, Michael and Le, Quoc and Sutton, Charles},
  journal={arXiv preprint arXiv:2108.07732},
  year={2021}
}

@article{wang2024mmlu,
  title={{MMLU-Pro: A More Robust and Challenging Multi-Task Language Understanding Benchmark}},
  author={Wang, Yubo and Ma, Xueguang and Zhang, Ge and Ni, Yuansheng and Chandra, Abhranil and Guo, Shiguang and Ren, Weiming and Arulraj, Aaran and He, Xuan and Jiang, Ziyan and others},
  journal={Advances in Neural Information Processing Systems},
  volume={37},
  pages={95266--95290},
  year={2024}
}

@inproceedings{hsieh2024ruler,
  title={{RULER: What's the Real Context Size of Your Long-Context Language Models?}},
  author={Hsieh, Cheng-Ping and Sun, Simeng and Kriman, Samuel and Acharya, Shantanu and Rekesh, Dima and Jia, Fei and Zhang, Yang and Ginsburg, Boris},
  booktitle={First Conference on Language Modeling},
  year={2024}
}

@inproceedings{lambert2024tulu,
  title={{T{\"u}lu 3: Pushing Frontiers in Open Language Model Post-Training}},
  author={Lambert, Nathan and Morrison, Jacob and Pyatkin, Valentina and Huang, Shengyi and Ivison, Hamish and Brahman, Faeze and Miranda, Lester James V. and Liu, Alisa and Dziri, Nouha and Lyu, Xinxi and others},
  booktitle={Second Conference on Language Modeling},
  year={2025}
}

@article{li2023seedbench,
  title={{SEED-Bench: Benchmarking Multimodal LLMs with Generative Comprehension}},
  author={Li, Bohao and Wang, Rui and Wang, Guangzhi and Ge, Yuying and Ge, Yixiao and Shan, Ying},
  journal={arXiv preprint arXiv:2307.16125},
  year={2023}
}

@inproceedings{choi2026decentralized,
  title     = {{Decentralized Instruction Tuning: Conflict-Aware Splitting and Weight Merging}},
  author    = {Choi, Minsik and Kim, Geewook},
  booktitle = {International Conference on Machine Learning},
  year      = {2026}
}

@article{olmo2025olmo,
  title={{Olmo 3}},
  author={{Team Olmo} and Ettinger, Allyson and Bertsch, Amanda and Kuehl, Bailey and Graham, David and Heineman, David and Groeneveld, Dirk and Brahman, Faeze and Timbers, Finbarr and Ivison, Hamish and others},
  journal={arXiv preprint arXiv:2512.13961},
  year={2025}
}

\appendix
\onecolumn

\noindent\textbf{Table of Contents}

\begin{tabularx}{\linewidth}{@{}X r@{}}
\toprule
\hyperref[app:proofs]{A. Theory and Proofs} \dotfill & p.~\pageref{app:proofs} \\
\quad \hyperref[app:proof_thmA]{A.1. Single-Head Perturbation Bound (Theorem~\ref{thm:perturb})} \dotfill & p.~\pageref{app:proof_thmA} \\
\quad \hyperref[app:proof_lemB]{A.2. Aggregate-Gap Saturation Transfer (Lemma~\ref{lem:saturation})} \dotfill & p.~\pageref{app:proof_lemB} \\
\quad \hyperref[app:stable_rank]{A.3. Frobenius / Stable-Rank Restatement} \dotfill & p.~\pageref{app:stable_rank} \\
\quad \hyperref[app:multi_layer]{A.4. Multi-Layer Composition Remark} \dotfill & p.~\pageref{app:multi_layer} \\
\quad \hyperref[app:layerwise_deriv]{A.5. Why Layerwise QK-RMSNorm Does Not Provide the Same Per-Head Guarantee} \dotfill & p.~\pageref{app:layerwise_deriv} \\
\quad \hyperref[app:numerical]{A.6. Numerical Sanity Checks} \dotfill & p.~\pageref{app:numerical} \\
\hyperref[app:setup]{B. Evaluation Setup} \dotfill & p.~\pageref{app:setup} \\
\quad \hyperref[sec:protocol]{B.1. Evaluation Protocols} \dotfill & p.~\pageref{sec:protocol} \\
\quad \hyperref[sec:extraction]{B.2. Backbone Extraction} \dotfill & p.~\pageref{sec:extraction} \\
\quad \hyperref[sec:impl]{B.3. Implementation Details and Reproducibility} \dotfill & p.~\pageref{sec:impl} \\
\quad \hyperref[app:datasets]{B.4. Models and Datasets} \dotfill & p.~\pageref{app:datasets} \\
\hyperref[app:behavior]{C. Per-Task and Behavioral Evidence} \dotfill & p.~\pageref{app:behavior} \\
\quad \hyperref[app:knowledge_coverage]{C.1. Capability Preservation vs.\ Format-Sensitive Degradation} \dotfill & p.~\pageref{app:knowledge_coverage} \\
\quad \hyperref[app:failure_categories]{C.2. Failure-Category Breakdown} \dotfill & p.~\pageref{app:failure_categories} \\
\quad \hyperref[app:sink_outcome]{C.3. Outcome-Bucketed Sink Probe} \dotfill & p.~\pageref{app:sink_outcome} \\
\quad \hyperref[app:extended_bench]{C.4. Extended Capability Panel} \dotfill & p.~\pageref{app:extended_bench} \\
\hyperref[app:mechanism]{D. Mechanism Details} \dotfill & p.~\pageref{app:mechanism} \\
\quad \hyperref[app:calib_sweep]{D.1. Calibration Sample-Size Robustness} \dotfill & p.~\pageref{app:calib_sweep} \\
\quad \hyperref[app:molmo]{D.2. Layerwise Backbones: Aggregate vs.\ Per-Head Sink} \dotfill & p.~\pageref{app:molmo} \\
\quad \hyperref[app:c1details]{D.3. Channel Ablation Sweep} \dotfill & p.~\pageref{app:c1details} \\
\quad \hyperref[app:e2details]{D.4. Random-$\Delta W$ Control} \dotfill & p.~\pageref{app:e2details} \\
\hyperref[app:predictor_supp]{E. Sink Strength Predictor} \dotfill & p.~\pageref{app:predictor_supp} \\
\quad \hyperref[app:predictor_detail]{E.1. Per-Pair Sink Strength Predictions} \dotfill & p.~\pageref{app:predictor_detail} \\
\quad \hyperref[app:predictor_17pair]{E.2. IFEval-Specific Predictor (17 pairs)} \dotfill & p.~\pageref{app:predictor_17pair} \\
\quad \hyperref[app:predictor_extended]{E.3. Multi-Task Generalization (9 pairs)} \dotfill & p.~\pageref{app:predictor_extended} \\
\quad \hyperref[app:sink_design_ablations]{E.4. Sink Strength Design Ablations} \dotfill & p.~\pageref{app:sink_design_ablations} \\
\hyperref[app:controls]{F. Controls} \dotfill & p.~\pageref{app:controls} \\
\quad \hyperref[app:lm_sft]{F.1. Modality Comparison: Text-Only Endpoint} \dotfill & p.~\pageref{app:lm_sft} \\
\quad \hyperref[app:vl_tulu]{F.2. Modality Control: VL vs.\ Text-Only Trajectory} \dotfill & p.~\pageref{app:vl_tulu} \\
\quad \hyperref[app:c3details]{F.3. Injection Experiment Training Recipe} \dotfill & p.~\pageref{app:c3details} \\
\quad \hyperref[app:method_sweep]{F.4. Weight-Merging Sweep on Format-Sensitive Benches} \dotfill & p.~\pageref{app:method_sweep} \\
\bottomrule
\end{tabularx}

\clearpage

\section{Theory and Proofs}
\label{app:proofs}

Notation as in Section~\ref{sec:method}.
We work with a single attention head, hidden size $d$, head dim $d_h$.
The pre-norm inputs are $\xi_q, \xi_p = \gamma_{\mathrm{in}} \odot \rmsnorm(x_q), \gamma_{\mathrm{in}} \odot \rmsnorm(x_p) \in \mathbb{R}^d$ at the query and key-position-$p$ tokens, and the pre-$\qknorm$ projections are $Q_{\mathrm{pre}} = W_q \xi_q$, $K_{\mathrm{pre},p} = W_k \xi_p$ with $W_q, W_k \in \mathbb{R}^{d_h \times d}$.
Under $\qknorm$ we set $\tilde q = Q_{\mathrm{pre}} / \mathrm{RMS}(Q_{\mathrm{pre}})$ and $Q = \gamma_q \odot \tilde q$ (similarly $K_p$).
For clarity, the main text omits RoPE from the notation.
Here, let $R_q$ and $R_p$ denote the orthogonal RoPE transformations at the query and key positions.
Orthogonality preserves the norm of each rotated vector, while differences across positions are bounded by the triangle inequality; this yields the sum-of-norms term in Equation~\ref{eq:thm_perturb_no}.
The sink-anchored gap perturbation is $B_{\mathrm{gap}} := \sup_{p \ne p_{\mathrm{sink}}} |\delta_p - \delta_{p_{\mathrm{sink}}}|$, and the aggregate sink gap is $G := l_{p_{\mathrm{sink}}} - \log \sum_{p \ne p_{\mathrm{sink}}} e^{l_p}$.

\subsection{Single-Head Perturbation Bound (Theorem~\ref{thm:perturb})}
\label{app:proof_thmA}

\begin{proof}[Proof of Theorem~\ref{thm:perturb}]
Let $\Delta W_q, \Delta W_k$ be the projection perturbations with $\opnorm{\Delta W_*} \le \varepsilon$.
The two cases are treated separately because the dependence of $Q, K_p$ on the projections is linear without $\qknorm$ and nonlinear (through the RMS denominator) with $\qknorm$: the no-$\qknorm$ case is exactly linear up to the explicit second-order cross term, whereas the $\qknorm$ case requires a local Taylor expansion through the RMS denominator.

\paragraph{No $\qknorm$.}
Let $R_q$ and $R_p$ denote the orthogonal RoPE transformations
at the query position and key position $p$, respectively.
The attention logit is
\[
\ell_p
=
\frac{(R_q W_q\xi_q)^\top (R_p W_k\xi_p)}{\sqrt{d_h}}.
\]
Since the projections are linear, the first-order perturbation is
\[
\delta_p
=
\frac{1}{\sqrt{d_h}}
\Big[
(R_q\Delta W_q\xi_q)^\top(R_pW_k\xi_p)
+
(R_qW_q\xi_q)^\top(R_p\Delta W_k\xi_p)
\Big]
+
O(\varepsilon^2).
\]

Let $s=p_{\mathrm{sink}}$.
The sink-anchored gap perturbation between $s$ and any $p\neq s$ is
\begin{align}
\delta_p-\delta_s
=
\frac{1}{\sqrt{d_h}}
\Big[
&(R_q\Delta W_q\xi_q)^\top
\big(R_pW_k\xi_p-R_sW_k\xi_s\big)
\nonumber\\
&+
(R_qW_q\xi_q)^\top
\big(R_p\Delta W_k\xi_p-R_s\Delta W_k\xi_s\big)
\Big]
+
O(\varepsilon^2).
\end{align}

Because each $R_p$ is orthogonal,
\begin{align}
\|R_pW_k\xi_p-R_sW_k\xi_s\|
&\le
\|W_k\|_{\mathrm{op}}
\big(\|\xi_p\|+\|\xi_s\|\big),\\
\|R_p\Delta W_k\xi_p-R_s\Delta W_k\xi_s\|
&\le
\varepsilon
\big(\|\xi_p\|+\|\xi_s\|\big).
\end{align}
Also,
\[
\|R_q\Delta W_q\xi_q\|
\le \varepsilon\|\xi_q\|,
\qquad
\|R_qW_q\xi_q\|
\le \|W_q\|_{\mathrm{op}}\|\xi_q\|.
\]
Applying Cauchy--Schwarz therefore gives
\[
|\delta_p-\delta_s|
\le
\frac{\varepsilon}{\sqrt{d_h}}
\|\xi_q\|
\big(\|W_q\|_{\mathrm{op}}+\|W_k\|_{\mathrm{op}}\big)
\big(\|\xi_s\|+\|\xi_p\|\big)
+
O(\varepsilon^2).
\]
Taking the supremum over $p\neq s$ and absorbing fixed numerical
factors into the absolute constant $c$ yields
Equation~\ref{eq:thm_perturb_no}.

\paragraph{With $\qknorm$.}
The logit is $l_p = (\gamma_q \odot \tilde q) \cdot (\gamma_k \odot \tilde k_p)/\sqrt{d_h}$ where $\tilde q = Q_{\mathrm{pre}}/\mathrm{RMS}(Q_{\mathrm{pre}})$ and analogously $\tilde k_p$.
The map from $W_q, W_k$ to $\tilde q, \tilde k_p$ is differentiable on the open set where $\mathrm{RMS}(Q_{\mathrm{pre}}), \mathrm{RMS}(K_{\mathrm{pre},p}) > 0$, which Assumption~\ref{ass:rms} ensures.
We work with the mathematical RMS here, and production implementations add a small stabilizer $\tau > 0$ inside the square root (distinct from the perturbation magnitude $\varepsilon$).
Since $\mathrm{RMS}_\tau \ge \mathrm{RMS}$, the unstabilized bound
used below is conservative.
To first order in $\varepsilon$ on a small neighborhood the logit perturbation decomposes into a Q-side and K-side term:
\[
  \delta_p = \tfrac{1}{\sqrt{d_h}}\bigl[
    (\gamma_q \odot \Delta \tilde q) \cdot (\gamma_k \odot \tilde k_p)
    + (\gamma_q \odot \tilde q) \cdot (\gamma_k \odot \Delta \tilde k_p)
  \bigr] + O(\varepsilon^2).
\]
We bound each term separately.
For the Q-side perturbation,
\[
  \Delta \tilde q
  = \frac{\Delta W_q \xi_q}{\mathrm{RMS}(Q_{\mathrm{pre}})}
    - \frac{Q_{\mathrm{pre}}\,\Delta \mathrm{RMS}}{\mathrm{RMS}^2(Q_{\mathrm{pre}})},
\]
with $|\Delta \mathrm{RMS}| \le \opnorm{\Delta W_q}\|\xi_q\|/\sqrt{d_h}$.
Both terms are $O(\varepsilon \|\xi_q\| / \mathrm{RMS}(Q_{\mathrm{pre}}))$, and by Assumption~\ref{ass:rms} $\mathrm{RMS}(Q_{\mathrm{pre}}) \ge m_q \|\xi_q\|$ on the calibration distribution, so
\[
  \|\Delta \tilde q\| \le \frac{c_1 \varepsilon}{m_q},
\]
with $c_1$ an absolute constant.
The crucial point is that $\|\xi_q\|$ has cancelled, since numerator and denominator both scale linearly with $\|\xi_q\|$.

For the sink-anchored gap perturbation we then bound the Q-side
contribution after RoPE. Since each $R_p$ is orthogonal and
mathematical unit-RMS gives
$\|\tilde k_p\|_2=\sqrt{d_h}$ (with the stabilizer $\tau$,
$\|\tilde k_p\|_2\le\sqrt{d_h}$), we have
\[
\big\|
R_{p_{\mathrm{sink}}}
(\gamma_k\odot\tilde k_{p_{\mathrm{sink}}})
-
R_p(\gamma_k\odot\tilde k_p)
\big\|
\le
2\|\gamma_k\|_\infty\sqrt{d_h}.
\]
Therefore,
\begin{align*}
  &\Big|
  R_q(\gamma_q \odot \Delta \tilde q) \cdot
  \Big[
  R_{p_{\mathrm{sink}}}
  (\gamma_k \odot \tilde k_{p_{\mathrm{sink}}})
  -
  R_p(\gamma_k \odot \tilde k_p)
  \Big]
  \Big| \\
  &\le
  \|\gamma_q\|_\infty
  \|\Delta\tilde q\|
  \cdot
  2\|\gamma_k\|_\infty\sqrt{d_h} \\
  &\le
  \frac{2c_1\varepsilon
  \|\gamma_q\|_\infty\|\gamma_k\|_\infty
  \sqrt{d_h}}{m_q}.
\end{align*}
A symmetric argument for the K-side gives the $1/m_k$ term.
Summing the two sides, dividing by the outer $\sqrt{d_h}$ in the logit definition, and taking the supremum over $p \ne p_{\mathrm{sink}}$ yields
\[
  B_{\mathrm{gap}}
  \le c\,\varepsilon\,\|\gamma_q\|_\infty\,\|\gamma_k\|_\infty
      \bigl(\tfrac{1}{m_q} + \tfrac{1}{m_k}\bigr)
      + O(\varepsilon^2),
\]
which is Equation~\ref{eq:thm_perturb_yes}.
\end{proof}

\paragraph{Input-magnitude cancellation.}
The $\|\xi_q\|$ that the no-$\qknorm$ bound carries is cancelled in the $\qknorm$ pathway by the RMS denominator.
What replaces it is the projection-RMS lower bound $m_q$, and if a calibration prompt puts $\xi_q$ in or near the null space of $W_q$, $m_q$ degrades and the bound loosens.
Assumption~\ref{ass:rms} excludes exact degeneracy; near-null cases appear as small $m_q$ or $m_k$ and therefore loosen the bound, consistent with the empirical $m_q, m_k > 0$ we measure on the calibration set.

\subsection{Aggregate-Gap Saturation Transfer (Lemma~\ref{lem:saturation})}
\label{app:proof_lemB}

\begin{proof}[Proof of Lemma~\ref{lem:saturation}]
Write the perturbed softmax probability at position $p$ as $a'_p = e^{l'_p} / Z'$ with $l'_p = l_p + \delta_p$ and $Z' = \sum_q e^{l'_q}$.
Then for $p \ne p_{\mathrm{sink}}$,
\[
  \frac{a'_p}{a'_{p_{\mathrm{sink}}}}
  = \exp(l_p - l_{p_{\mathrm{sink}}} + \delta_p - \delta_{p_{\mathrm{sink}}}).
\]
Summing over $p \ne p_{\mathrm{sink}}$,
\begin{align*}
  \frac{1 - a'_{p_{\mathrm{sink}}}}{a'_{p_{\mathrm{sink}}}}
  &= \sum_{p \ne p_{\mathrm{sink}}} \exp\bigl(l_p - l_{p_{\mathrm{sink}}}\bigr)
     \exp\bigl(\delta_p - \delta_{p_{\mathrm{sink}}}\bigr) \\
  &\le \exp\bigl(B_{\mathrm{gap}}\bigr)
       \sum_{p \ne p_{\mathrm{sink}}} \exp\bigl(l_p - l_{p_{\mathrm{sink}}}\bigr) \\
  &= \exp\bigl(B_{\mathrm{gap}} - G\bigr),
\end{align*}
using the definition $G = l_{p_{\mathrm{sink}}} - \log\sum_{p \ne p_{\mathrm{sink}}} e^{l_p}$ and $|\delta_p - \delta_{p_{\mathrm{sink}}}| \le B_{\mathrm{gap}}$ for all $p \ne p_{\mathrm{sink}}$.
Let $u := 1 - a'_{p_{\mathrm{sink}}}$ and write the inequality as $u/(1-u) \le e^{B_{\mathrm{gap}} - G}$.
Solving for $u$ gives $u \le e^{B_{\mathrm{gap}} - G}/(1 + e^{B_{\mathrm{gap}} - G}) = \sigma(B_{\mathrm{gap}} - G)$, which is the logistic form of Equation~\ref{eq:lem_sat}.
The weaker exponential bound $u \le e^{B_{\mathrm{gap}} - G}$ follows from $\sigma(x) \le e^x$.
\end{proof}

\paragraph{Comparison to top-2 gap arguments.}
If one tried to state the same lemma using the top-2 gap $T = l_{p_{\mathrm{sink}}} - \max_{p \ne p_{\mathrm{sink}}} l_p$ instead of the aggregate $G$, one would need an extra $\log N$ term to control the sum $\sum_{p \ne p_{\mathrm{sink}}} e^{l_p}$, giving $1 - a'_{p_{\mathrm{sink}}} \le N \exp(-T + B_{\mathrm{gap}})$ or equivalently $\exp(-T + B_{\mathrm{gap}} + \log N)$, which is vacuous unless $T > \log N + B_{\mathrm{gap}}$.
The aggregate-gap form sidesteps this and matches the measured quantity directly.

\subsection{Frobenius / Stable-Rank Restatement of Theorem~\ref{thm:perturb}}
\label{app:stable_rank}

The bound is stated in terms of $\opnorm{\Delta W_q}$ and $\opnorm{\Delta W_k}$.
Practitioners often report the Frobenius norms $\frob{\Delta W_q}, \frob{\Delta W_k}$ instead, and the conversion is via per-projection stable ranks.

\paragraph{The restatement.}
Let $r_q := \frob{\Delta W_q}^2 / \sigma_{\max}^2(\Delta W_q)$ and analogously $r_k$.
By definition $\opnorm{\Delta W_*} = \sigma_{\max}(\Delta W_*) = \frob{\Delta W_*}/\sqrt{r_*}$.
Keeping the two operator norms separate in the proof before upper-bounding both by a common $\varepsilon$, and substituting $\opnorm{\Delta W_*}=\frob{\Delta W_*}/\sqrt{r_*}$, gives the Frobenius/stable-rank restatement
\begin{equation}
  B_{\mathrm{gap}}
  \;\le\;
  c\,\|\gamma_q\|_\infty\,\|\gamma_k\|_\infty
  \left(
    \frac{\frob{\Delta W_q}}{\sqrt{r_q}\,m_q}
    +
    \frac{\frob{\Delta W_k}}{\sqrt{r_k}\,m_k}
  \right)
  + O(\varepsilon^2),
  \label{eq:Btight_app}
\end{equation}
with empirical inputs $\{r_q, r_k, \frob{\Delta W_q}, \frob{\Delta W_k}, m_q, m_k, \|\gamma_q\|_\infty, \|\gamma_k\|_\infty\}$.
This is a rewriting using stable ranks, not a tightening. If one perturbation is zero, its corresponding term is taken to be zero; otherwise the expression applies with the stable rank defined above.

\paragraph{Role of the restatement.}
The Frobenius form makes the structural observation explicit: the
QK-RMSNORM side has no explicit raw-input-magnitude factor, only the stable ranks $r_q,r_k$ and the $\gamma,m$ constants.
The number that enters the main text is the directly measured $B_{\mathrm{gap}}$ on the calibration set (Section~\ref{sec:method}), and the measured value on Qwen3 ($2.06$ nats) is about $2.4\times$ smaller than on Qwen2.5 ($5.01$ nats) on the same $15$-prompt calibration set (Table~\ref{tab:G_Bgap}).
The Qwen3-versus-Qwen2.5 comparison is consistent with the theorem's structural prediction that the QK-RMSNORM side is less sensitive to raw input magnitude, but we do not claim it as a causal isolation of QK-RMSNORM alone, since Qwen3 and Qwen2.5 differ in pre-training data, RL stage, and tokenizer in addition to QK-RMSNORM.

\subsection{Multi-Layer Composition Remark}
\label{app:multi_layer}

Theorem~\ref{thm:perturb} and Lemma~\ref{lem:saturation} are single-layer statements.
A formal multi-layer composition requires additional control of the value vectors, residual stream, MLP, and post-attention LayerNorm, which we do not provide here.
We instead state an informal chaining intuition.
If at every layer $\ell$ the per-layer base aggregate gap $G_{\mathrm{base}, \ell}$ exceeds the per-layer gap perturbation by a margin $G_{\mathrm{base}, \ell} - B_{\mathrm{gap}, \ell} \ge \mu$, then by Lemma~\ref{lem:saturation} the perturbed non-sink mass at that layer is at most $e^{-\mu}$, and the next layer \emph{is approximately fed} a hidden state close to the LLM baseline under additional Lipschitz assumptions on the intervening operators.
We do not formalize these assumptions and report this remark only as motivation for the per-layer diagnostic.

\paragraph{Caveat.}
In our calibration measurements, the marginal proxy is negative in the top late layers of Qwen2.5 (Section~\ref{sec:method}).
A tight multi-layer theorem (in the spirit of~\citep{olsson2022context}'s circuit composition) is future work.

\subsection{Why Layerwise QK-RMSNorm Does Not Provide the Same Per-Head Guarantee}
\label{app:layerwise_deriv}

This subsection derives the claim of Remark~\ref{rem:layerwise}, that a shared RMS denominator does not provide the same per-head margin guarantee as per-head $\qknorm$.

\paragraph{Per-head normalization.}
Under per-head $\qknorm$, head $h$ is divided by its own RMS, $\mathrm{RMS}(Q_h) = \|Q_h\|/\sqrt{d_h}$, so
\begin{equation}
\|\tilde Q_h\| = \Bigl\| \gamma_q^{(h)} \odot \frac{Q_h}{\mathrm{RMS}(Q_h)} \Bigr\|
\le \|\gamma_q^{(h)}\|_\infty \sqrt{d_h} .
\label{eq:perhead_env}
\end{equation}
The input magnitude $\|Q_h\|$ cancels, which is exactly why the $\qknorm$ branch of Theorem~\ref{thm:perturb} carries no $\|\xi\|$ factor.

\paragraph{Layerwise normalization.}
Under layerwise $\qknorm$, all $H$ heads share one denominator over the concatenated vector $Q_{\mathrm{cat}} = [Q_1; \cdots; Q_H]$, so $\mathrm{RMS}(Q_{\mathrm{cat}}) = \|Q_{\mathrm{cat}}\|/\sqrt{H d_h}$.
Extracting the $h$-th block of $\tilde Q_{\mathrm{cat}} = \gamma_q \odot Q_{\mathrm{cat}}/\mathrm{RMS}(Q_{\mathrm{cat}})$ gives
\begin{equation}
\|\tilde Q_h\| \le \|\gamma_q^{(h)}\|_\infty \|Q_h\| \frac{\sqrt{H d_h}}{\|Q_{\mathrm{cat}}\|} ,
\label{eq:layerwise_env}
\end{equation}
which replaces the per-head envelope of Equation~\ref{eq:perhead_env}.
The same derivation applies to keys.

\paragraph{What the shared denominator costs.}
Write $\rho_h := \|Q_h\| / \|Q_{\mathrm{cat}}\|$ for the share of layer energy carried by head $h$, so that Equation~\ref{eq:layerwise_env} reads $\|\gamma_q^{(h)}\|_\infty \rho_h \sqrt{H d_h}$.
Both normalizations remove the input magnitude, since $\rho_h$ is unchanged when $\xi_q$ is rescaled.
What per-head normalization additionally provides is a scale
independent of the head energy, namely $\sqrt{d_h}$ up to the
learned factor $\|\gamma_q^{(h)}\|_\infty$.
The layerwise scale instead additionally tracks $\rho_h$.
Under equal per-head energy $\rho_h = 1/\sqrt{H}$ and Equation~\ref{eq:layerwise_env} collapses to Equation~\ref{eq:perhead_env}, but a head carrying less than its share has $\rho_h < 1/\sqrt{H}$ and is compressed relative to its neighbours.

\paragraph{Consequence for the margin.}
Compression attenuates that head's attention logits, and with them the sink's logit lead, so $G_{\mathrm{base}}$ can be small at heads that carry little of the layer's energy.
The margin $G_{\mathrm{base}} - B_{\mathrm{gap}}$ can then fail from the base side rather than through a larger perturbation, which is the opposite of the no-$\qknorm$ failure mode where $\|\xi\|$ enters the bound on $B_{\mathrm{gap}}$ directly.
Aggregate sink mass is a sum over heads and can stay high while individual compressed heads carry almost no lead, so preservation of the aggregate does not imply preservation of the per-head margin.
Appendix~\ref{app:molmo} reports the empirical counterpart, where Molmo2-O has the lowest per-head $G_{\mathrm{base}}$ among the calibrated pairs at an unremarkable $B_{\mathrm{gap}}$, while the no-$\qknorm$ Qwen2.5-VL shows the reverse profile.

\subsection{Numerical Sanity Checks}
\label{app:numerical}

We verify two pieces on small synthetic cases.

\paragraph{Theorem~\ref{thm:perturb}, $d = 16$, $d_h = 4$, no $\qknorm$.}
We consider a RoPE-free synthetic case with Gaussian-initialized $W_q, W_k$.
We instantiate a sequence of $N = 8$ key positions with i.i.d.\ unit-norm Gaussian backgrounds, then form $\xi_{p_{\mathrm{sink}}}$ by adding a massive-activation-style channel at index $3$ with value $10$ on top of its background, while $\xi_p$ for $p \ne p_{\mathrm{sink}}$ retain only the unit-norm background.
The sink hidden state thus has a sink/non-sink magnitude gap dominated by the size-$10$ channel rather than being constrained to unit norm.
We draw Gaussian $\Delta W_*$ with $\opnorm{\Delta W_*} = 0.1$.
The measured sink-anchored quantity is
$\sup_{p \ne p_{\mathrm{sink}}}|\delta_p - \delta_{p_{\mathrm{sink}}}| = 0.169$.
Because this synthetic case omits RoPE, the proof admits the sharper specialization with
$\|\xi_{p_{\mathrm{sink}}} - \xi_p\|$
before the triangle-inequality relaxation used in Equation~\ref{eq:thm_perturb_no}.
This specialized bound is $0.197$, giving a ratio of $0.86$.
Since
$\|\xi_{p_{\mathrm{sink}}} - \xi_p\|
\le
\|\xi_{p_{\mathrm{sink}}}\| + \|\xi_p\|$,
the more general RoPE-aware bound in Equation~\ref{eq:thm_perturb_no} is also satisfied.

\paragraph{Lemma~\ref{lem:saturation}, $G = 3.0$, $B_{\mathrm{gap}} = 0.4$, $N = 64$.}
Direct softmax computation with $l_{p_{\mathrm{sink}}} = 5$ and the $N - 1$ non-sink positions equally distributed to give $G = 3.0$.
To realize the worst-case gap perturbation, we set $\delta_p = +B_{\mathrm{gap}}/2$ for every $p \ne p_{\mathrm{sink}}$ and $\delta_{p_{\mathrm{sink}}} = -B_{\mathrm{gap}}/2$, so that $\sup_{p \ne p_{\mathrm{sink}}}|\delta_p - \delta_{p_{\mathrm{sink}}}| = B_{\mathrm{gap}}$ is attained.
Empirical $1 - a'_{p_{\mathrm{sink}}} = 0.069$.
The logistic bound from Equation~\ref{eq:lem_sat} is $\sigma(0.4 - 3.0) = \sigma(-2.6) \approx 0.069$.
Ratio $1.00$ (the bound is tight under the worst-case perturbation).

\twocolumn

\section{Evaluation Setup}
\label{app:setup}

This appendix collects evaluation infrastructure used across the paper: the harness protocols (Appendix~\ref{sec:protocol}), the backbone-extraction pipeline (Appendix~\ref{sec:extraction}), and hardware/software details for reproducibility (Appendix~\ref{sec:impl}).

\subsection{Evaluation Protocols}
\label{sec:protocol}

We evaluate all text-side capability with \texttt{lm-evaluation-harness} v0.4.12~\citep{eval-harness} under two protocols whose disagreement on a single backbone is itself diagnostic.

\paragraph{Instruct protocol.}
\texttt{--apply\_chat\_template}, \texttt{--num\_fewshot 0}, \texttt{--batch\_size 8}, bf16.
This matches the intended chat-style usage of instruction-tuned models.

\paragraph{Community-default protocol.}
The harness's per-task default few-shot count ($\texttt{mmlu}{=}5$, $\texttt{gsm8k}{=}8$, etc.) with the chat template \emph{disabled}.
This provides a common leaderboard-style comparison.

\paragraph{Nine-task suite.}
MMLU, MMLU-Pro, BoolQ, MBPP (code, pass@1), RULER (long-context), GSM8K-CoT, IFEval (\texttt{prompt\_level\_strict\_acc}), GPQA-Diamond-CoT, and EQ-Bench (raw points).
IFEval is the headline metric for sink corruption because its checker grades surface format directly~\citep{zhou2023instruction}.

\paragraph{Pipeline sanity.}
Our LLM-side numbers reproduce the Qwen blog values on Qwen2.5-7B-Instruct GSM8K-CoT to within $0.93$ pt ($90.67$ vs.\ $91.6$) and MMLU-Pro to within $0.46$ pt ($55.84$ vs.\ $56.30$) when run under matched OpenCompass configurations.
The deltas reported throughout this paper compare a reference LLM to its VLM-LM under the same protocol, reducing sensitivity to framework-specific evaluation differences.

\subsection{Backbone Extraction}
\label{sec:extraction}

To evaluate text-only behavior of a VLM with \texttt{lm-evaluation-harness}, we extract its language backbone as a standalone causal LM.
For Qwen2.5-VL and the LLaVA-OneVision / LLaVA-Llama3 families, we remap the language-backbone parameters to the corresponding \texttt{<Family>ForCausalLM} namespace while preserving the VLM's original input embeddings and output language-model head, including the original \texttt{lm\_head.weight}.
For Qwen3-VL, the same procedure additionally preserves the
\texttt{q\_norm} / \texttt{k\_norm} scale parameters used in
per-head normalization.
For our QK-RMSNorm-injected Qwen2.5-3B variant (Section~\ref{sec:controls}), we use a self-contained \texttt{trust\_remote\_code} module so that the extracted backbone follows the same numerical pathway as during training.
All extracted backbones are saved to \texttt{safetensors}, and the round-trip reproduces the original language module's logits on a held-out text-only probe.

\subsection{Implementation Details and Reproducibility}
\label{sec:impl}

\paragraph{Hardware.}
Model training (the Section~\ref{sec:negative} $\qknorm$ injection experiment) ran on four NVIDIA H100 GPUs (80 GB), with two GPUs allocated to each variant.
All text-side evaluation (lm-evaluation-harness across the nine-task suite for all 6 pairs and the text-only endpoint control, with MMLU-Pro and RULER measured on the five headline pairs and Molmo2-O excluded for those two), the Sink Strength calibration forward passes (Section~\ref{sec:predictor}, $\sim 8$~s per 7B backbone), and the $G_{\mathrm{base}}, B_{\mathrm{gap}}$ measurement on the $15$-prompt calibration set ($\sim 52$~s/pair, full pipeline including VLM-LM forward and $m_q/m_k$ compute) ran on a single NVIDIA A6000 (48 GB) in bf16.

\paragraph{Software.}
PyTorch 2.10+cu128, transformers 4.57.6, DeepSpeed 0.19.0, lm-evaluation-harness 0.4.12, VLMEvalKit.

\subsection{Models and Datasets}
\label{app:datasets}

This subsection collects Hugging Face URLs and paper references for every model and dataset used in the paper.

\paragraph{Vision-language models (headline).}
The five headline VLMs and the Molmo2-O control (Section~\ref{sec:problem_setup}):
\begin{itemize}[leftmargin=*,itemsep=1pt,topsep=2pt]
  \item Qwen3-VL-8B-Instruct~\citep{bai2025qwen3}: \url{https://huggingface.co/Qwen/Qwen3-VL-8B-Instruct}
  \item InternVL3.5-8B~\citep{wang2025internvl3}: \url{https://huggingface.co/OpenGVLab/InternVL3_5-8B}
  \item Qwen2.5-VL-7B-Instruct~\citep{bai2025qwen25vltechnicalreport}: \url{https://huggingface.co/Qwen/Qwen2.5-VL-7B-Instruct}
  \item InternVL3-8B~\citep{zhu2025internvl3}: \url{https://huggingface.co/OpenGVLab/InternVL3-8B}
  \item LLaVA-OneVision-Qwen2-7B-OV~\citep{li2024llava}: \url{https://huggingface.co/lmms-lab/llava-onevision-qwen2-7b-ov}
  \item Molmo2-O-7B~\citep{clark2026molmo2}: \url{https://huggingface.co/allenai/Molmo2-O-7B}
\end{itemize}

\paragraph{Vision-language models (extended 17-pair panel).}
The eleven additional VLMs in the IFEval-specific 17-pair panel (Appendix~\ref{app:predictor_17pair}):
\begin{itemize}[leftmargin=*,itemsep=1pt,topsep=2pt]
  \item Qwen3-VL-4B-Instruct~\citep{bai2025qwen3}: \url{https://huggingface.co/Qwen/Qwen3-VL-4B-Instruct}
  \item Qwen3-VL-2B-Instruct~\citep{bai2025qwen3}: \url{https://huggingface.co/Qwen/Qwen3-VL-2B-Instruct}
  \item Qwen3-VL-30B-A3B-Instruct~\citep{bai2025qwen3}: \url{https://huggingface.co/Qwen/Qwen3-VL-30B-A3B-Instruct}
  \item Ovis2-8B~\citep{lu2024ovis}: \url{https://huggingface.co/AIDC-AI/Ovis2-8B}
  \item Ovis2-34B~\citep{lu2024ovis}: \url{https://huggingface.co/AIDC-AI/Ovis2-34B}
  \item MolmoE-1B-0924~\citep{deitke2025molmo}: \url{https://huggingface.co/allenai/MolmoE-1B-0924}
  \item InternVL3-2B~\citep{zhu2025internvl3}: \url{https://huggingface.co/OpenGVLab/InternVL3-2B}
  \item InternVL3-14B~\citep{zhu2025internvl3}: \url{https://huggingface.co/OpenGVLab/InternVL3-14B}
  \item MiniCPM-V-4.5~\citep{yu2025minicpmv45}: \url{https://huggingface.co/openbmb/MiniCPM-V-4_5}
  \item LLaVA-NeXT-Mistral-7B~\citep{liu2024improved}: \url{https://huggingface.co/llava-hf/llava-v1.6-mistral-7b-hf}
  \item Idefics3-8B-Llama3~\citep{laurenccon2024building}: \url{https://huggingface.co/HuggingFaceM4/Idefics3-8B-Llama3}
\end{itemize}

\paragraph{Text-only reference LLMs.}
The text-only reference models evaluated alongside the extracted VLM-LMs:
\begin{itemize}[leftmargin=*,itemsep=1pt,topsep=2pt]
  \item Qwen3-8B~\citep{yang2025qwen3}: \url{https://huggingface.co/Qwen/Qwen3-8B}
  \item Qwen3-4B~\citep{yang2025qwen3}: \url{https://huggingface.co/Qwen/Qwen3-4B}
  \item Qwen3-1.7B~\citep{yang2025qwen3}: \url{https://huggingface.co/Qwen/Qwen3-1.7B}
  \item Qwen3-30B-A3B-Instruct-2507~\citep{yang2025qwen3}: \url{https://huggingface.co/Qwen/Qwen3-30B-A3B-Instruct-2507}
  \item Qwen2.5-7B-Instruct~\citep{qwen2025qwen25technicalreport}: \url{https://huggingface.co/Qwen/Qwen2.5-7B-Instruct}
  \item Qwen2.5-1.5B-Instruct~\citep{qwen2025qwen25technicalreport}: \url{https://huggingface.co/Qwen/Qwen2.5-1.5B-Instruct}
  \item Qwen2.5-14B-Instruct~\citep{qwen2025qwen25technicalreport}: \url{https://huggingface.co/Qwen/Qwen2.5-14B-Instruct}
  \item Qwen2.5-32B-Instruct~\citep{qwen2025qwen25technicalreport}: \url{https://huggingface.co/Qwen/Qwen2.5-32B-Instruct}
  \item Qwen2-7B-Instruct~\citep{yang2024qwen2technicalreport}: \url{https://huggingface.co/Qwen/Qwen2-7B-Instruct}
  \item Qwen2.5-3B-Instruct~\citep{qwen2025qwen25technicalreport}: \url{https://huggingface.co/Qwen/Qwen2.5-3B-Instruct}
  \item Olmo-3-7B-Instruct~\citep{olmo2025olmo}: \url{https://huggingface.co/allenai/Olmo-3-7B-Instruct}
  \item OLMoE-1B-7B-0924-Instruct~\citep{muennighoff2025olmoe}: \url{https://huggingface.co/allenai/OLMoE-1B-7B-0924-Instruct}
  \item Mistral-7B-Instruct-v0.2~\citep{jiang2023mistral7b}: \url{https://huggingface.co/mistralai/Mistral-7B-Instruct-v0.2}
  \item Llama-3.1-8B-Instruct~\citep{grattafiori2024llama}: \url{https://huggingface.co/meta-llama/Llama-3.1-8B-Instruct}
\end{itemize}

\paragraph{Vision tower and projector donor.}
The injection experiment of Appendix~\ref{app:c3details} takes its vision tower and projector from Qwen2.5-VL-3B-Instruct~\citep{bai2025qwen25vltechnicalreport}, \url{https://huggingface.co/Qwen/Qwen2.5-VL-3B-Instruct}.

\paragraph{Modality-control variant.}
The text-only further-training endpoint used in the 7B modality comparison
(Appendix~\ref{app:lm_sft}): Qwen2.5-7B-Instruct-1M, \url{https://huggingface.co/Qwen/Qwen2.5-7B-Instruct-1M}.

\paragraph{Text evaluation datasets.}
Four format-sensitive tasks and five capability tasks reported in the paper, all run via \texttt{lm-evaluation-harness}~\citep{eval-harness} (Appendix~\ref{sec:protocol}):
\begin{itemize}[leftmargin=*,itemsep=1pt,topsep=2pt]
  \item IFEval~\citep{zhou2023instruction}: \url{https://huggingface.co/datasets/google/IFEval}
  \item EQ-Bench v2.1~\citep{paech2023eq}: \url{https://github.com/EQ-bench/EQ-Bench}
  \item GSM8K~\citep{cobbe2021training}: \url{https://huggingface.co/datasets/openai/gsm8k}
  \item GPQA-Diamond~\citep{rein2023gpqa}: \url{https://huggingface.co/datasets/Idavidrein/gpqa}
  \item MMLU~\citep{hendrycks2020measuring}: \url{https://huggingface.co/datasets/cais/mmlu}
  \item BoolQ~\citep{clark2019boolq}: \url{https://huggingface.co/datasets/google/boolq}
  \item MMLU-Pro~\citep{wang2024mmlu}: \url{https://huggingface.co/datasets/TIGER-Lab/MMLU-Pro}
  \item MBPP~\citep{austin2021program}: \url{https://huggingface.co/datasets/google-research-datasets/mbpp}
  \item RULER~\citep{hsieh2024ruler}: 
  \url{https://github.com/NVIDIA/RULER}
\end{itemize}

\paragraph{Calibration prompts.}
The $15$-prompt set used for Sink Strength (Section~\ref{sec:predictor}) and the $G_{\mathrm{base}}, B_{\mathrm{gap}}$ measurement (Section~\ref{sec:method}) consists of short instruction-following prompts we wrote in the style of IFEval. No IFEval instance is used, so the calibration set is disjoint from the data we evaluate on by construction.

\paragraph{Licenses and terms of use.}
All Hugging Face checkpoints and evaluation datasets listed above are used under their published licenses, predominantly Apache~2.0 and MIT, with a small number under model-specific licenses (the Llama~3.1 Community License for Llama-3.1-8B-Instruct, and the research-only Qwen Research License for Qwen2.5-3B-Instruct and Qwen2.5-VL-3B-Instruct, which we use for research purposes only).
We download each artifact under its stated terms and do not redistribute the underlying weights or data.
The evaluation harness \texttt{lm-evaluation-harness}~\citep{eval-harness} is MIT-licensed.

\section{Per-Task and Behavioral Evidence}
\label{app:behavior}

This appendix expands the per-task behavioral split surfaced in Section~\ref{sec:problem_setup}: capability vs.\ format-sensitive coverage (Appendix~\ref{app:knowledge_coverage}), IFEval failure-category counts (Appendix~\ref{app:failure_categories}), an outcome-bucketed sink probe (Appendix~\ref{app:sink_outcome}), and an extended capability panel on coding, reasoning, and long-context retrieval (Appendix~\ref{app:extended_bench}).

\subsection{Capability Preservation vs.\ Format-Sensitive Degradation}
\label{app:knowledge_coverage}

Section~\ref{sec:problem_setup} shows that several non-format-sensitive capabilities are comparatively preserved while format-following degrades.
We verify this here on the knowledge- and comprehension-oriented
benchmarks surfaced in the main text, and Appendix~\ref{app:extended_bench} extends the check to coding, long-context retrieval, and advanced reasoning.

\paragraph{Knowledge and comprehension.}
Table~\ref{tab:knowledge_coverage} reports the per-pair $\Delta$ on MMLU and BoolQ alongside the four format-sensitive anchors under the same instruct protocol.
MMLU and BoolQ drops never exceed $2.3$~pt across the six pairs, and on three pairs they even improve.
On the same backbones the headline IFEval gaps reach $-9.6$ pt (no-$\qknorm$) and $-18.7$ pt (layerwise).

\begin{table}[ht]
\centering
\setlength{\tabcolsep}{2pt}
\fontsize{8}{10}\selectfont
\begin{tabular}{@{}l r r >{\columncolor{gray!15}}r >{\columncolor{gray!15}}r >{\columncolor{gray!15}}r >{\columncolor{gray!15}}r@{}}
\toprule
 & \multicolumn{2}{c}{Knowledge} & \multicolumn{4}{c}{Format-sensitive} \\
\cmidrule(lr){2-3} \cmidrule(lr){4-7}
Pair & MMLU & BoolQ & IFEval & EQ & GSM8K & GPQA \\
\midrule
Qwen3-VL     & $+1.6$ & $+6.0$ & $-5.4$  & $-2.4$  & $-2.1$  & $-1.9$ \\
InternVL3.5  & $+3.0$ & $+7.6$ & $-1.8$  & $-0.8$  & $-1.5$  & $-0.6$ \\
\addlinespace[1pt]
Qwen2.5-VL   & $-2.1$ & $-1.2$ & $-9.6$  & $-11.2$ & $-14.3$ & $-9.8$ \\
InternVL3    & $+3.0$ & $+4.5$ & $-8.9$  & $-10.3$ & $-12.5$ & $-7.1$ \\
LLaVA-OV     & $-2.3$ & $+0.1$ & $-7.9$  & $-8.6$  & $-11.9$ & $-8.4$ \\
\midrule
Molmo2-O     & $-0.0$ & $-1.5$ & $-18.7$ & $-64.6$ & $-42.7$ & $-19.7$ \\
\midrule
\textbf{Mean} & $\mathbf{+0.5}$ & $\mathbf{+2.6}$ & $\mathbf{-8.7}$ & $\mathbf{-16.3}$ & $\mathbf{-14.2}$ & $\mathbf{-7.9}$ \\
\bottomrule
\end{tabular}
\caption{Per-pair $\Delta$ on knowledge- and comprehension-oriented benchmarks (MMLU, BoolQ) vs.\ format-sensitive benchmarks (shaded). Same instruct protocol throughout, except that the Molmo2-O GPQA baseline is the published Olmo-3-7B-Instruct score rather than our own matched run, which would give $-10.6$. Pair-level QK category in Table~\ref{tab:pair_metadata}. EQ is EQ-Bench v2.1 (raw points); GSM8K is GSM8K-CoT flexible-extract; GPQA is GPQA-Diamond-CoT flexible-extract.}
\label{tab:knowledge_coverage}
\end{table}

\paragraph{Ruling out a format confound.}
A sceptical reader could attribute the contrast to MCQA-vs-open-generation format rather than capability: MMLU and BoolQ preserve because they are loglikelihood-graded letter-picks, and the format-sensitive tasks degrade because they are open generation.
Two facts in our data rule out this simple reading.
First, GPQA-Diamond-CoT is MCQA in format (the answer is one of A/B/C/D, scored by flexible-extract on a free-text rationale), yet it drops by $-7.1$ to $-9.8$~pt on the three no-$\qknorm$ pairs, almost the full IFEval-scale magnitude.
Second, GSM8K-CoT is open generation with a multi-step reasoning trajectory, yet it stays within $\pm 2.1$~pt on the two per-head $\qknorm$ pairs.
Preservation therefore tracks the capability being probed
(knowledge/comprehension vs.\ instruction-following / reasoning-trajectory) and the QK protection of the reference LLM, not the task's MCQA-vs-generation format.

\subsection{Failure-Category Breakdown}
\label{app:failure_categories}

For each VLM-LM we partition the IFEval set into the four buckets reported in Table~\ref{tab:regression} (\emph{both\_pass}, \emph{regression}, \emph{recovered}, \emph{both\_fail}).
The \emph{regression} bucket (prompts the LLM passes but the VLM-LM fails) is the locus of the per-pair degradation.
Table~\ref{tab:failure_categories} reports the dominant failing IFEval instruction categories per pair (counts of failed instructions within the regression bucket).
"Punctuation" is largely \texttt{no\_comma}, "combination" is largely \texttt{repeat\_prompt} or \texttt{two\_responses}, and "length\_constraints" is largely \texttt{number\_words} or \texttt{number\_paragraphs}.
"Language" is the \texttt{response\_language} constraint, "startend" covers \texttt{quotation} and \texttt{end\_checker}, and "detectable\_format" covers \texttt{multiple\_sections} and \texttt{number\_bullet\_lists}.
"Change\_case" covers \texttt{english\_capital} and \texttt{english\_lowercase}, and "keywords" covers \texttt{existence} and \texttt{letter\_frequency}.

\begin{table}[ht]
\centering
\setlength{\tabcolsep}{1pt}
\fontsize{8}{10}\selectfont
\begin{tabular}{@{}l c c c c c c@{}}
\toprule
 & \rotatebox{60}{Qwen3-VL}
 & \rotatebox{60}{InternVL3.5}
 & \rotatebox{60}{Qwen2.5-VL}
 & \rotatebox{60}{InternVL3}
 & \rotatebox{60}{LLaVA-OV}
 & \rotatebox{60}{Molmo2-O} \\
\midrule
length\_constraints  &  9 &  8 & 17 & 16 &  7 & \textbf{20} \\
detectable\_format   &  8 &  5 & 10 &  8 & 20 & \textbf{30} \\
keywords             &  9 &  6 & 13 & 18 &  4 & \textbf{19} \\
change\_case         &  5 &  8 & 11 & \textbf{17} & 16 &  8 \\
combination          & 14 &  - &  6 & 13 &  7 & \textbf{22} \\
startend             &  7 &  4 &  4 &  7 & 12 & \textbf{18} \\
language             &  5 &  - &  7 &  5 &  8 & \textbf{10} \\
punctuation          &  3 &  1 & \textbf{28} & 21 &  8 &  3 \\
detectable\_content  &  3 &  3 &  3 &  5 &  1 & \textbf{11} \\
\midrule
\textbf{Total}       & \textbf{56} & \textbf{43} & \textbf{91} & \textbf{96} & \textbf{83} & \textbf{126} \\
\bottomrule
\end{tabular}
\caption{Regressed IFEval prompts, counted per violated instruction category, per pair. A prompt that violates several categories is counted in each, and a prompt whose violations fall outside the listed categories is counted in none, so the columns need not sum to the Total row. The Total row is the regressed-prompt count and matches the \emph{Lost} column of Table~\ref{tab:regression}. "$-$" marks absence; \textbf{bold} marks the row-max.}
\label{tab:failure_categories}
\end{table}

\subsection{Outcome-Bucketed Sink Probe}
\label{app:sink_outcome}

For each VLM-LM we partition the IFEval prompts by \texttt{prompt\_level\_strict\_acc} into a pass bucket ($\ge 0.5$) and a fail bucket ($< 0.5$), then run a single forward pass per prompt with \texttt{output\_attentions=True} and aggregate position-0 attention probability over (late layer $\times$ head $\times$ last-$8$ query positions).
We report the per-prompt mean (so the bootstrap is over prompts, not flattened head/layer observations) and a $95\%$ percentile CI from bootstrap over $100$ prompts sampled per outcome bucket ($200$ per pair).

\subsection{Extended Capability Panel}
\label{app:extended_bench}

Beyond the knowledge/comprehension benchmarks and the four format-sensitive tasks, the nine-task suite of Appendix~\ref{sec:protocol} covers coding (MBPP, execution pass@1), advanced multi-domain reasoning (MMLU-Pro), and long-context retrieval (RULER).
Table~\ref{tab:extended_bench} reports these three on the five headline pairs; Molmo2-O, the layerwise control, is not included in this panel.

The picture across the three is mixed rather than uniform.
Coding rises (mean $+7.3$), long-context retrieval is roughly flat (mean $-1.1$), and advanced reasoning declines mildly (mean $-2.7$, worst $-7.4$ on Qwen2.5-VL).
What separates them from the format-sensitive cluster is the comparison on the same five pairs, where the four format-sensitive means run $-5.6$ to $-8.5$~pt.
None of these three benchmarks separates by per-head $\qknorm$, unlike the format-sensitive four.
The loss therefore tracks the demand for controlled output
formatting rather than the task's domain or difficulty alone,
which is what the mechanism of Section~\ref{sec:method} predicts.

\begin{table}[ht]
\centering
\setlength{\tabcolsep}{6pt}
\fontsize{8}{10}\selectfont
\begin{tabular}{@{}l l r r r@{}}
\toprule
Pair & QK & MBPP & MMLU-Pro & RULER \\
\midrule
Qwen3-VL     & per-head & $+7.6$  & $-3.4$ & $-0.2$ \\
InternVL3.5  & per-head & $-2.4$  & $+0.6$ & $-0.4$ \\
\addlinespace[1pt]
Qwen2.5-VL   & none     & $+14.2$ & $-7.4$ & $+0.3$ \\
InternVL3    & none     & $+21.4$ & $-0.2$ & $+0.6$ \\
LLaVA-OV     & none     & $-4.4$  & $-3.2$ & $-6.0$ \\
\midrule
\textbf{Mean} & & $\mathbf{+7.3}$ & $\mathbf{-2.7}$ & $\mathbf{-1.1}$ \\
\bottomrule
\end{tabular}
\caption{Extended capability panel: per-pair $\Delta$ (VLM-LM $-$ reference LLM, pp) on coding (MBPP, execution pass@1), advanced multi-domain reasoning (MMLU-Pro), and long-context retrieval (RULER), across the five headline pairs; Molmo2-O excluded. Same instruct protocol; per-pair QK category as in Table~\ref{tab:pair_metadata}.}
\label{tab:extended_bench}
\end{table}

\FloatBarrier

\section{Mechanism Details}
\label{app:mechanism}

This appendix expands the mechanism analyses of Section~\ref{sec:method}: the calibration-set size choice (Appendix~\ref{app:calib_sweep}), the two layerwise controls (Appendix~\ref{app:molmo}), channel-level ablation (Appendix~\ref{app:c1details}), and a random-direction perturbation control (Appendix~\ref{app:e2details}).

\subsection{Calibration Sample-Size Robustness}
\label{app:calib_sweep}

The $(G_{\mathrm{base}}, B_{\mathrm{gap}})$ measurement of Section~\ref{sec:method} and the Sink Strength $S$ of Section~\ref{sec:predictor} both use a default of $N\!=\!15$ calibration prompts.
We verify this choice is converged by sweeping $N \in \{5, 10, 15, 25, 50, 100\}$ with $3$ random seeds per cell, drawing each subset uniformly from a $100$-prompt calibration pool disjoint from the IFEval test set.
At $N\!=\!100$ every seed draws the entire pool, so the seed scatter vanishes there by construction.
This produces $18$ measurements per pair across the $6$ pairs, $108$ filled cells in total.
The hyperparameter is measurement-time only: it never trains a model and never sees benchmark labels, so it controls only the variance of the estimator.

\begin{figure*}[t]
\centering
\includegraphics[width=\textwidth]{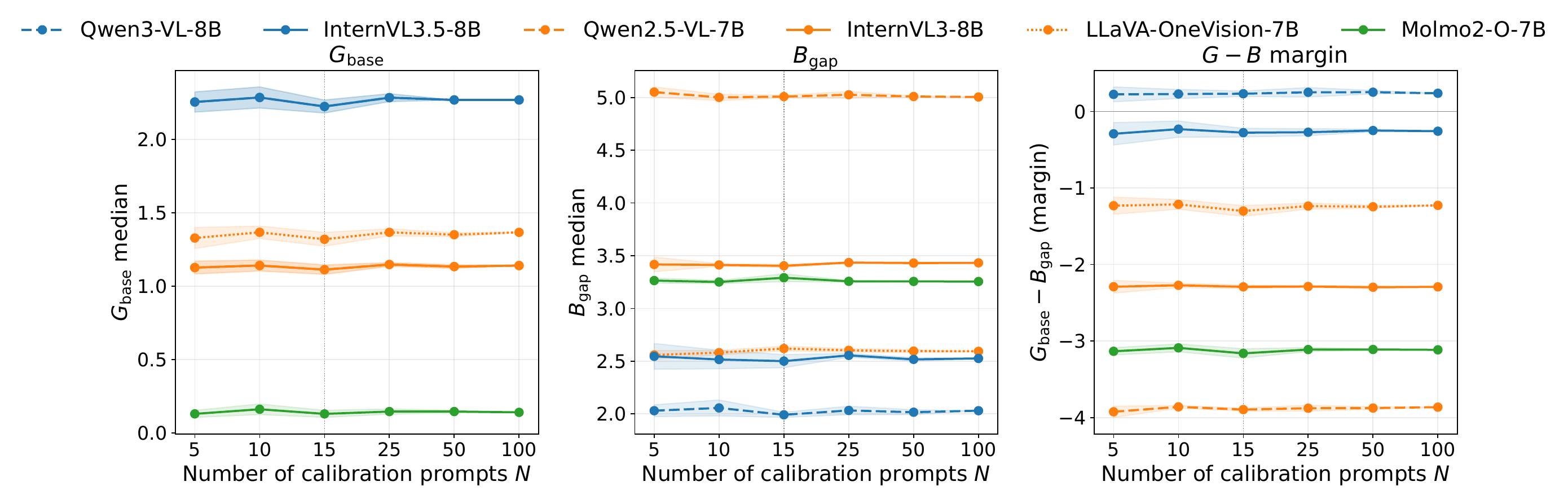}
\caption{$G_{\mathrm{base}}$, $B_{\mathrm{gap}}$, and the $G_{\mathrm{base}}-B_{\mathrm{gap}}$ margin versus calibration size $N$ for each of the six pairs. Lines are seed means, shaded bands are $\pm \sigma$ across $3$ seeds. The vertical dashed line marks the paper-default $N\!=\!15$; the horizontal line in the right panel marks the $G-B = 0$ sign-flip boundary.}
\label{fig:calib_sweep}
\end{figure*}

\paragraph{Observations.}
(i) \emph{Plateau from $N \approx 10$.}
Seed-mean shifts are $\le 0.05$ in $G_{\mathrm{base}}$, $\le 0.05$ in $B_{\mathrm{gap}}$, and $\le 0.06$ in $G-B$ between $N\!=\!10$ and $N\!=\!100$ for every pair; by $N\!=\!15$ the curves are indistinguishable from the $N\!=\!100$ asymptote within seed scatter.
(ii) \emph{Seed scatter contracts as expected.}
$\sigma(G-B)$ drops from $\sim\!0.15$ at $N\!=\!5$ (worst case, InternVL3.5) to $\le 0.07$ by $N\!=\!15$ and $\le 0.02$ by $N\!=\!50$.
(iii) \emph{Pair ordering invariant.}
Across all $108$ (pair, $N$, seed) cells, the sign of $G-B$ is preserved (positive only for Qwen3-VL, negative for the other five), and the rank ordering of pairs on $G-B$ does not flip at any cell.
The behavioral split and the magnitude ordering used in Section~\ref{sec:method} therefore do not depend on the calibration set choice.

\paragraph{Sink Strength $S$.}
The sweep above tracks the margin $G_{\mathrm{base}} - B_{\mathrm{gap}}$.
The predictor of Section~\ref{sec:predictor} uses $S$ alone, which is less stable under small $N$, so we report it separately.
Moving from $N\!=\!15$ to $N\!=\!5$ shifts $S$ on the six pairs by at most $0.11$ (mean $0.09$), against the $\le 0.06$ margin shift above.
The predictor is nevertheless unchanged, with $\rho(S, \Delta_{\mathrm{IFEval}}) = 0.971$ at both sizes and no reordering of the six pairs.
Since $N\!=\!5$ also runs about $3\times$ faster ($\sim\!3$ s versus $\sim\!8$ s per $7$B backbone on an A6000), it reproduces the headline predictor at a third of the cost.
We keep $N\!=\!15$ as the default because the extra prompts are inexpensive and buy a margin against seed variance on broader panels, where the smaller set is not a safe drop-in replacement.

\begin{table*}[ht]
\centering
\setlength{\tabcolsep}{6pt}
\fontsize{8}{10}\selectfont
\begin{tabular}{@{}l r r r r r r@{}}
\toprule
Pair & $N\!=\!5$ & $N\!=\!10$ & $N\!=\!\mathbf{15}$ & $N\!=\!25$ & $N\!=\!50$ & $N\!=\!100$ \\
\midrule
Qwen2.5-VL-7B       & $-3.92 \pm 0.07$ & $-3.86 \pm 0.02$ & $\mathbf{-3.90 \pm 0.02}$ & $-3.88 \pm 0.04$ & $-3.88 \pm 0.02$ & $-3.86 \pm 0.00$ \\
InternVL3-8B        & $-2.29 \pm 0.08$ & $-2.27 \pm 0.03$ & $\mathbf{-2.29 \pm 0.03}$ & $-2.29 \pm 0.01$ & $-2.30 \pm 0.02$ & $-2.29 \pm 0.00$ \\
LLaVA-OneVision-7B  & $-1.23 \pm 0.11$ & $-1.21 \pm 0.06$ & $\mathbf{-1.30 \pm 0.07}$ & $-1.24 \pm 0.04$ & $-1.25 \pm 0.02$ & $-1.23 \pm 0.00$ \\
Qwen3-VL-8B         & $+0.22 \pm 0.10$ & $+0.23 \pm 0.06$ & $\mathbf{+0.23 \pm 0.04}$ & $+0.25 \pm 0.06$ & $+0.25 \pm 0.02$ & $+0.24 \pm 0.00$ \\
InternVL3.5-8B      & $-0.29 \pm 0.15$ & $-0.23 \pm 0.11$ & $\mathbf{-0.28 \pm 0.06}$ & $-0.27 \pm 0.05$ & $-0.25 \pm 0.02$ & $-0.26 \pm 0.00$ \\
Molmo2-O-7B         & $-3.13 \pm 0.05$ & $-3.09 \pm 0.05$ & $\mathbf{-3.16 \pm 0.06}$ & $-3.11 \pm 0.03$ & $-3.11 \pm 0.01$ & $-3.11 \pm 0.00$ \\
\bottomrule
\end{tabular}
\caption{$G_{\mathrm{base}}-B_{\mathrm{gap}}$ margin ($\mu \pm \sigma$ across $3$ seeds) over the full $N$ grid; the paper-default $N\!=\!15$ column is bold. Each cell averages $3$ random $N$-prompt subsets drawn from the $100$-prompt sweep pool, which is separate from the fixed $15$-prompt calibration set used in Table~\ref{tab:G_Bgap} and throughout the paper. This is why the $N\!=\!15$ column differs slightly from that table. The pair ordering is identical in both.}
\label{tab:calib_sweep}
\end{table*}

\FloatBarrier

\subsection{Layerwise Backbones: Aggregate vs.\ Per-Head Sink}
\label{app:molmo}

Molmo2-O-7B is built on Olmo-3, which applies $\qknorm$ \emph{layerwise}: at each layer, RMS is computed across the full $H \cdot d_h$-dimensional concatenated Q (and K) vector before $\gamma_q, \gamma_k$ are applied, rather than across each head's $d_h$-dimensional slice independently (as in Qwen3 / InternVL3.5).
At the model-mean level the layerwise scale preserves a strong sink: position-$0$ attention is $0.250$ ($95\%$ CI $[0.242, 0.258]$), over $8\times$ larger than on the no-$\qknorm$ Qwen2.5-VL backbone ($0.030$).
At the per-head level the distribution is different.
On the same calibration set ($15$-prompt, top-$\sim\!10$ late layers), median $G_{\mathrm{base}}$ across (layer, head, query position) is $+0.24$ nats (the lowest among the six pairs), strong-sink prevalence ($G_{\mathrm{base}}>2$, equivalently $a_{\mathrm{sink}}>0.881$) over non-degenerate late-layer (prompt, layer, head, query) observations is 8.5\%, against 58\% on the per-head $\qknorm$ backbone (Qwen3-8B).
The per-head margin protection of Lemma~\ref{lem:saturation} is therefore not delivered head-by-head, and IFEval drops by $18.7$ pt, GSM8K-CoT by $42.7$ pt, EQ-Bench by $64.6$ pt, GPQA-Diamond-CoT by 19.7 pt against the Olmo-3-7B-Instruct reference (the GPQA figure is measured against the published Olmo-3 score; our own matched run gives $10.6$ pt).
The reading is that aggregate model-level sink mass is not sufficient and that the evidence points to per-head sink strength as the relevant quantity.
We therefore exclude Molmo2-O from the headline 5-pair comparison and report it separately as a distinguishing control.

\paragraph{A second layerwise pair.}
MolmoE-1B, built on OLMoE-1B-7B-0924 and scored here against the released OLMoE-1B-7B-0924-Instruct, applies $\qknorm$ across all heads at once on an MoE backbone and is a second natural layerwise case.
It reaches $S = +1.34$ with $G_{\mathrm{base}} - B_{\mathrm{gap}} = -2.84$ and an IFEval drop of $9.4$ pt, so it follows the same pattern as Molmo2-O ($S = +0.24$, $G_{\mathrm{base}} - B_{\mathrm{gap}} = -3.01$, $18.7$ pt) with the lower-$S$ pair taking the larger drop.
Both layerwise pairs sit well below the per-head pairs, which share $S = +2.36$, and both carry strongly negative margins while differing in base family and in dense versus MoE architecture.
The two are not independent of recipe, since both come from the Molmo and PixMo line, so the recipe-independent derivation of Appendix~\ref{app:layerwise_deriv} is what carries the argument and these two pairs stand as convergent empirical support rather than as a controlled comparison.
Neither is a headline pair, and both are reported here as layerwise controls.

\subsection{Channel Ablation Sweep}
\label{app:c1details}

For each of the three normalization-scale tensors in Qwen3-VL-8B's language backbone (\texttt{input\_layernorm} $\gamma_{\mathrm{in}}$,
\texttt{q\_norm} $\gamma_q$, \texttt{k\_norm} $\gamma_k$), we identify the top-$K$ channels by $|\gamma|$ at each layer and replace those entries with the layer-wise channel mean.
This disables the amplifier while leaving the residual stream and projections untouched.
We fix $K=10$ for all three tensors, so the ablation targets the largest-magnitude amplifier channels at each layer.
Joint kill (all three norms simultaneously) costs 44.36 pt on IFEval, against a sum of individual costs of 19.96 pt, a 24.40-pt super-additive interaction.

\subsection{Random-\texorpdfstring{$\Delta W$}{Delta W} Control}
\label{app:e2details}

We add Gaussian noise $G_W \sim \mathcal{N}(0, \sigma_W^2 I)$ to each attention / MLP projection $W$ of Qwen2.5-7B-Instruct with $\sigma_W$ chosen so that the per-sub-module relative Frobenius norm $\frob{G_W}/\frob{W}$ exactly matches the corresponding ratio observed between Qwen2.5-VL-7B's extracted LM and Qwen2.5-7B-Instruct.
The $\gamma$ parameters of every RMSNorm are left untouched.
The result is a model whose per-submodule relative Frobenius distances from the reference LLM match those observed for Qwen2.5-VL, but whose perturbation direction is random.
IFEval under the instruct protocol drops to 10.72 ($\Delta=-61.37$ against the reference LLM at 72.09), compared with the observed Qwen2.5-VL gap of $-9.6$ pt.

\section{Sink Strength Predictor}
\label{app:predictor_supp}

\subsection{Per-Pair Sink Strength Predictions}
\label{app:predictor_detail}

Per-pair numerical detail behind Figure~\ref{fig:predictor} (Section~\ref{sec:predictor}): the reference-LLM Sink Strength $S$, the observed VLM--reference IFEval gap, and the leave-one-pair-out linear prediction.

\begin{table}[ht]
\centering
\setlength{\tabcolsep}{4pt}
\fontsize{8}{10}\selectfont
\begin{tabular}{@{}l c c c c@{}}
\toprule
Pair & QK & $S$ & Actual $\Delta$ & Held-out pred $\Delta$ \\
\midrule
Qwen3-VL    & per-head  & $+2.36$ & $-5.4$   & $-0.92$  \\
InternVL3.5 & per-head  & $+2.36$ & $-1.8$   & $-3.41$  \\
Qwen2.5-VL  & none      & $+1.18$ & $-9.6$   & $-10.78$ \\
InternVL3   & none      & $+1.18$ & $-8.9$   & $-10.95$ \\
LLaVA-OV    & none      & $+1.44$ & $-7.9$   & $-9.04$  \\
Molmo2-O    & layerwise & $+0.24$ & $-18.7$  & $-13.74$ \\
\bottomrule
\end{tabular}
\caption{Per-pair Sink Strength $S$ measured on the reference LLM, the observed IFEval gap between the VLM-LM and reference LLM, and the leave-one-pair-out linear prediction.}
\label{tab:predictor}
\end{table}

\subsection{IFEval-Specific Predictor (17 pairs)}
\label{app:predictor_17pair}

The IFEval-specific predictor panel (Table~\ref{tab:predictor_17pair}) measures Sink Strength $S$ against the VLM--reference IFEval prompt-strict $\Delta$ on an extended 17-pair set: the 6 pairs plus 11 additional VLMs that span MoE architectures (Qwen3-VL-30B-A3B-Instruct, MolmoE-1B), four reference-LLM families (Qwen, Olmo, Mistral, Llama-3.1) plus the MoE OLMoE, and backbone sizes $1.5$B--$32$B (full pair-to-reference-LLM mapping with Hugging Face URLs in Appendix~\ref{app:datasets}).
The panel therefore covers MoE and dense architectures, four reference-LLM families, and $1.5$B--$32$B scale.
Over it, Spearman $\rho(S, \Delta_{\mathrm{IFEval}}) = +0.88$.
Two caveats bound what a single rank correlation can carry here.
Pairs that share a reference LLM share $S$ by construction, so the $17$ pairs supply $12$ distinct $S$ values, and $13$ of the $17$ are Qwen-family, so the four non-Qwen backbones are single points rather than a family-level sample.
Two entries illustrate the resolution limit of a reference-LLM statistic.
Ovis2-8B shares $S=+1.18$ with Qwen2.5-VL and InternVL3 but loses $-17.9$ pt against
their $-9.6$ and $-8.9$, and Qwen3-VL-30B-A3B is the one pair whose VLM-LM exceeds its
reference ($+1.7$ pt).
$S$ orders backbones by risk; it does not predict the magnitude a particular VL recipe
will realize.

\begin{table*}[t]
\centering
\setlength{\tabcolsep}{6pt}
\fontsize{8}{10}\selectfont
\begin{tabular}{@{}l l l c r@{}}
\toprule
Pair & Reference LLM & QK class & $S$ & $\Delta$IFEval \\
\midrule
Qwen2.5-VL              & Qwen2.5-7B-Instruct       & none      & $+1.18$ & $-9.6$ \\
Qwen3-VL-8B             & Qwen3-8B                  & per-head  & $+2.36$ & $-5.4$ \\
InternVL3-8B            & Qwen2.5-7B-Instruct       & none      & $+1.18$ & $-8.9$ \\
InternVL3.5-8B          & Qwen3-8B                  & per-head  & $+2.36$ & $-1.8$ \\
LLaVA-OV-7B             & Qwen2-7B-Instruct         & none      & $+1.44$ & $-7.9$ \\
Molmo2-O-7B             & Olmo-3-7B-Instruct        & layerwise & $+0.24$ & $-18.7$ \\
\addlinespace[2pt]
Qwen3-VL-4B             & Qwen3-4B                  & per-head        & $+2.36$ & $-5.0$ \\
Qwen3-VL-2B             & Qwen3-1.7B                & per-head        & $+1.84$ & $-5.2$ \\
Ovis2-8B                & Qwen2.5-7B-Instruct       & none            & $+1.18$ & $-17.9$ \\
Ovis2-34B               & Qwen2.5-32B-Instruct      & none            & $+1.39$ & $-6.7$ \\
MolmoE-1B               & OLMoE-1B-7B-0924-Instruct & layerwise (MoE) & $+1.34$ & $-9.4$ \\
InternVL3-2B            & Qwen2.5-1.5B-Instruct     & none            & $+0.90$ & $-12.2$ \\
InternVL3-14B           & Qwen2.5-14B-Instruct      & none            & $+1.30$ & $-5.2$ \\
MiniCPM-V-4.5           & Qwen3-8B                  & per-head        & $+2.36$ & $-4.4$ \\
LLaVA-NeXT-Mistral-7B   & Mistral-7B-Instruct-v0.2  & none            & $-0.19$ & $-10.9$ \\
Idefics3-8B             & Llama-3.1-8B-Instruct     & none            & $-0.21$ & $-22.7$ \\
Qwen3-VL-30B-A3B        & Qwen3-30B-A3B-Instruct-2507 & per-head (MoE)  & $+1.66$ & $+1.7$ \\
\midrule
\multicolumn{3}{l}{$\rho_{\mathrm{Spearman}}(S, \Delta_{\mathrm{IFEval}})$ on $17$ pairs} & & $+0.88$ \\
\bottomrule
\end{tabular}
\caption{IFEval-specific 17-pair predictor panel. Sink Strength $S$
measured on the reference LLM versus the VLM-LM--reference-LLM
IFEval prompt-strict $\Delta$. The panel spans dense and MoE
architectures, all three QK-norm classes (per-head / layerwise / none), four reference-LLM families (Qwen, Olmo, Mistral, Llama-3.1) plus the MoE OLMoE, and backbone sizes $1.5$B--$32$B. Hugging Face URLs and paper references for every pair are in Appendix~\ref{app:datasets}.}
\label{tab:predictor_17pair}
\end{table*}

\subsection{Multi-Task Generalization (9 pairs)}
\label{app:predictor_extended}

Where Appendix~\ref{app:predictor_17pair} widens the model set at a fixed task, this panel widens the task set instead.
It adds three sub-10B Qwen-family pairs to the six (Qwen3-VL-4B, Qwen3-VL-2B, InternVL3-2B), which extends the panel down to 1.5B reference LLMs while keeping all four format-sensitive tasks on the unified protocol.
Table~\ref{tab:predictor_extended} reports per-pair Sink Strength $S$ and the four VLM--reference deltas.
The IFEval correlation is $+0.945$ against $+0.971$ on the six, and the other three tasks fall between $+0.82$ and $+0.88$.

\begin{table*}[t]
\centering
\setlength{\tabcolsep}{6pt}
\fontsize{8}{10}\selectfont
\begin{tabular}{@{}l l l c r r r r@{}}
\toprule
Pair & Reference LLM & QK & $S$ & $\Delta$IFEval & $\Delta$GSM8K & $\Delta$EQ-Bench & $\Delta$GPQA-D \\
\midrule
Qwen2.5-VL          & Qwen2.5-7B-Instruct      & none      & $+1.18$ & $\mathbf{-9.6}$  & $-14.3$ & $-11.2$ & $-9.8$ \\
Qwen3-VL-8B         & Qwen3-8B                 & per-head  & $+2.36$ & $\mathbf{-5.4}$  & $-2.1$  & $-2.4$  & $-1.9$ \\
InternVL3-8B        & Qwen2.5-7B-Instruct      & none      & $+1.18$ & $\mathbf{-8.9}$  & $-12.5$ & $-10.3$ & $-7.1$ \\
InternVL3.5-8B      & Qwen3-8B                 & per-head  & $+2.36$ & $\mathbf{-1.8}$  & $-1.5$  & $-0.8$  & $-0.6$ \\
LLaVA-OV-7B         & Qwen2-7B-Instruct        & none      & $+1.44$ & $\mathbf{-7.9}$  & $-11.9$ & $-8.6$  & $-8.4$ \\
Molmo2-O-7B         & Olmo-3-7B-Instruct       & layerwise & $+0.24$ & $\mathbf{-18.7}$ & $-42.7$ & $-64.6$ & $-19.7$ \\
\addlinespace[2pt]
Qwen3-VL-4B$^{\star}$   & Qwen3-4B               & per-head  & $+2.36$ & $\mathbf{-5.0}$  & $+13.9$ & $+1.4$  & $-2.0$ \\
Qwen3-VL-2B$^{\star}$   & Qwen3-1.7B             & per-head  & $+1.84$ & $\mathbf{-5.2}$  & $+27.5$ & $-0.7$  & $-6.1$ \\
InternVL3-2B$^{\star}$  & Qwen2.5-1.5B-Instruct  & none      & $+0.90$ & $\mathbf{-12.2}$ & $-18.0$ & $-9.0$  & $-6.1$ \\
\midrule
\multicolumn{4}{l}{$\rho_{\mathrm{Spearman}}(S, \Delta)$ on $9$ pairs} & $+0.945$ & $+0.877$ & $+0.860$ & $+0.821$ \\
\bottomrule
\end{tabular}
\caption{Extended 9-pair predictor panel. $\Delta$ denotes VLM-LM minus the corresponding reference LLM (negative $=$ lower performance relative to the reference) on IFEval prompt-strict, GSM8K-CoT flexible-extract, EQ-Bench v2.1, and GPQA-Diamond-CoT flexible-extract; all evaluated under the unified protocol (Appendix~\ref{sec:protocol}), except that the Molmo2-O GPQA baseline is the published Olmo-3-7B-Instruct score rather than our own matched run. $^{\star}$ marks the three pairs added beyond the six, which span $S \in [0.90, 2.36]$ and reference-LLM sizes $1.5$B--$4$B. Bold $\Delta$IFEval column is the headline metric.}
\label{tab:predictor_extended}
\end{table*}

\subsection{Sink Strength Design Ablations}
\label{app:sink_design_ablations}

Section~\ref{sec:predictor} defines $S$ as a median taken over the last $\sim\!10$ decoder layers.
This subsection isolates the three choices that definition makes.

\paragraph{Why the late layers.}
We split each reference LLM's decoder into equal early, middle, and late thirds and recompute $S$ within each band (Table~\ref{tab:sink_design}).
Sink concentration and its correlation with the outcome rise together with depth.
The early third carries almost no signal, while the late third alone reproduces the full predictor at $\rho = 0.97$.
The late third overlaps the last-$\sim\!10$-layer default for every headline pair, so the emphasis on late layers reflects where the sink lives rather than a tuned window.

\begin{table}[ht]
\centering
\setlength{\tabcolsep}{8pt}
\fontsize{8}{10}\selectfont
\begin{tabular}{@{}l c c@{}}
\toprule
Third & $S$ mean & $\rho(S, \Delta_{\mathrm{IFEval}})$ \\
\midrule
Early  & $+0.20$ & $+0.09$ \\
Middle & $+0.91$ & $+0.79$ \\
Late   & $+1.56$ & $+0.97$ \\
\bottomrule
\end{tabular}
\caption{Sink Strength recomputed within equal thirds of the decoder, on the six pairs. Both the magnitude of $S$ and its rank correlation with the VLM--reference IFEval gap increase with depth.}
\label{tab:sink_design}
\end{table}

\paragraph{Sensitivity to the window size.}
Recomputing $S$ over the last $K$ layers for $K \in \{3, 5, 10, 15, 20\}$ leaves the rank correlation unchanged at $\rho = 0.971$ for every $K$.
The magnitudes of $S$ shift with $K$, but no pairwise ordering reverses, so the last-$\sim\!10$-layer window is a robust choice rather than a tuned one.

\paragraph{Why the median.}
The statistic should report whether the \emph{typical} head forms a sink, which is what a median over $(\ell, h, q)$ measures, whereas a mean is inflated by a few strong heads.
Substituting the mean preserves both the sign and the pair ordering, so this choice changes none of the conclusions drawn from $S$.
Appendix~\ref{app:molmo} is the case that motivates it, where a strong model-level aggregate coexists with a weak per-head distribution that only the per-head statistic exposes.

\section{Controls}
\label{app:controls}

This appendix details the modality comparison of Section~\ref{sec:problem_setup} and the controls of Section~\ref{sec:negative}:
a text-only Qwen2.5-family endpoint (Appendix~\ref{app:lm_sft}),
a step-wise VL versus text-only trajectory from a shared 3B base
(Appendix~\ref{app:vl_tulu}), a post-pretraining $\qknorm$ injection experiment (Appendix~\ref{app:c3details}), and a weight-merging sweep (Appendix~\ref{app:method_sweep}).

\subsection{Modality Comparison: Text-Only Endpoint}
\label{app:lm_sft}

The Section~\ref{sec:problem_setup} modality comparison (Table~\ref{tab:lm_sft_control}) uses Qwen2.5-7B-Instruct-1M, the Qwen team's official long-context text-only release, as a Qwen2.5-family further-training endpoint (Hugging Face URL in Appendix~\ref{app:datasets}).

Both this model and Qwen2.5-VL-7B-Instruct are evaluated against the same Qwen2.5-7B-Instruct text-only reference.
The text-only variant is evaluated under the same 0-shot instruct protocol as the rest of the paper (Appendix~\ref{sec:protocol}), with batch size 8 on a single A6000.

\begin{figure*}[!t]
\centering
\includegraphics[width=\textwidth]{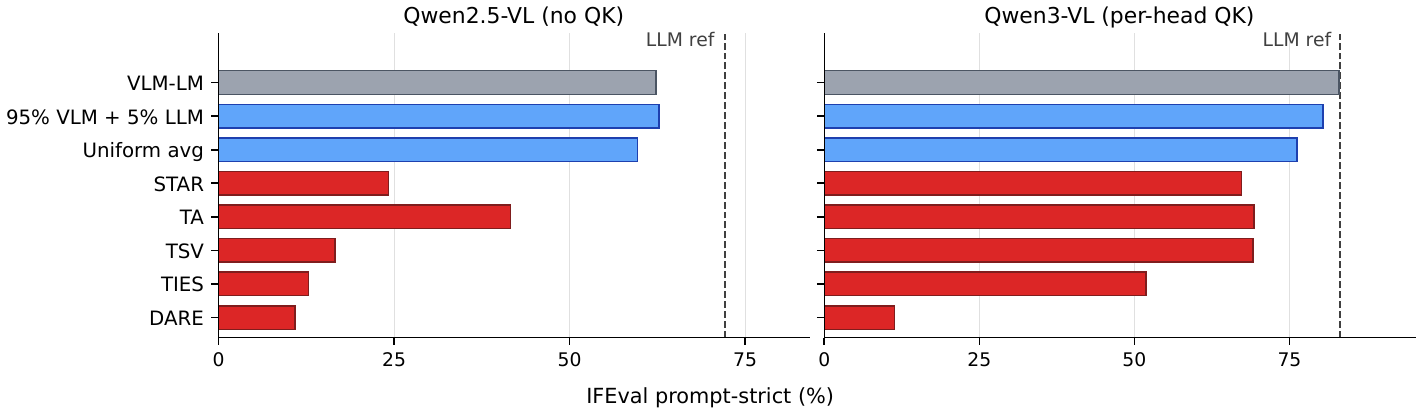}
\caption{IFEval prompt-strict for Qwen2.5-VL (non-protected) and Qwen3-VL (per-head $\qknorm$) under seven weight-merging settings. Dashed line is each pair's reference LLM.}
\label{fig:method_sweep}
\end{figure*}

\subsection{Modality Control: VL vs.\ Text-Only Trajectory}
\label{app:vl_tulu}

The Appendix~\ref{app:lm_sft} comparison contrasts released $7$B endpoints against a common text-only reference.
This subsection instead tracks the two modalities step by step on the $3$B base of the injection experiment, so the sink can be observed while it changes.
Starting from \texttt{Qwen2.5-3B-Instruct}, we run vision fine-tuning on image-text instruction data (the Stage-2 recipe of Appendix~\ref{app:c3details}) and a matched text-only run on T\"ulu general instruction SFT~\citep{lambert2024tulu}, at the same effective batch size, learning rate, and number of steps, matching the optimization settings while varying the training modality and data.
At each checkpoint we measure Sink Strength $S$ on the trained backbone, vision understanding on SEEDBench-IMG~\citep{li2023seedbench}, and IFEval prompt-strict.

Table~\ref{tab:vl_tulu} reports the trajectories.
The vision run gains vision understanding while its sink collapses, with SEEDBench-IMG rising from $60.1$ to $64.8$ and $S$ falling from $+0.41$ at the shared base to $+0.20$ at step $10$k, and IFEval sitting in the $34$--$44$ band throughout.
The text-only run moves the sink in the opposite direction, holding $S$ between $+0.58$ and $+0.65$, and gives up about $5$ pt of IFEval.
The step-$0$ IFEval of $59.9$ reproduces the published \texttt{Qwen2.5-3B-Instruct} score of $58.2$~\citep{qwen2025qwen25technicalreport} to within $1.7$ pt, so the drops start from a correctly measured baseline.
Within the vision run the two curves are not step-by-step aligned.
IFEval takes most of its loss by step $1$k, before $S$ has fallen, and then recovers slightly while $S$ continues to decline, so $S$ tracks the regime the run ends in rather than the step-level fluctuation.

\begin{table}[ht]
\centering
\setlength{\tabcolsep}{4pt}
\fontsize{8}{10}\selectfont
\begin{tabular}{@{}r c c c c c@{}}
\toprule
 & \multicolumn{3}{c}{VL adaptation} & \multicolumn{2}{c}{T\"ulu (text-only)} \\
\cmidrule(lr){2-4} \cmidrule(lr){5-6}
Step & $S$ & SEED & IFEval & $S$ & IFEval \\
\midrule
$0$      & $+0.41$ & --     & $59.9$ & $+0.41$ & $59.9$ \\
$1$k     & $+0.54$ & $60.1$ & $36.8$ & $+0.65$ & $53.1$ \\
$2$k     & $+0.49$ & $60.9$ & $37.2$ & $+0.58$ & $53.6$ \\
$3$k     & $+0.35$ & $62.1$ & $34.2$ & $+0.61$ & $55.3$ \\
$4$k     & $+0.30$ & $63.0$ & $40.5$ & $+0.60$ & $56.9$ \\
$5$k     & $+0.24$ & $63.9$ & $39.7$ & $+0.58$ & $54.3$ \\
$6$k     & $+0.22$ & $63.9$ & $38.6$ & $+0.58$ & $55.1$ \\
$7$k     & $+0.19$ & $64.3$ & $42.3$ & $+0.58$ & $55.3$ \\
$8$k     & $+0.24$ & $64.7$ & $40.7$ & $+0.58$ & $53.8$ \\
$9$k     & $+0.20$ & $65.0$ & $43.1$ & $+0.58$ & $54.9$ \\
$10$k    & $+0.20$ & $64.8$ & $43.6$ & $+0.58$ & $55.8$ \\
\bottomrule
\end{tabular}
\caption{Step-wise modality control on \texttt{Qwen2.5-3B-Instruct}. $S$ is Sink Strength on the trained backbone, SEED is SEEDBench-IMG overall accuracy, and IFEval is prompt-strict. Both runs start from the same base and use the same batch size, learning rate, and step count, while differing in training modality and data.}
\label{tab:vl_tulu}
\end{table}

\subsection{Injection Experiment Training Recipe}
\label{app:c3details}

The injection experiment trains two 3B VLMs from the same Qwen2.5-3B-Instruct base under the same LLaVA Stage-1 / Stage-2 recipe.
The two variants are \emph{vanilla} (no $\qknorm$) and \emph{$\qknorm$ injected} with unit-scale initialization $\gamma_q = \gamma_k = 1$.
Vision tower and projector come from Qwen2.5-VL-3B-Instruct.
Both runs proceed on dedicated $2 \times$~NVIDIA H100 nodes with DeepSpeed ZeRO-2.
Total walltime is $\sim 47$~h per variant ($\sim 7$~h Stage~1, $\sim 40$~h Stage~2).

\begin{table}[ht]
\centering
\setlength{\tabcolsep}{3pt}
\fontsize{8}{10}\selectfont
\begin{tabular}{@{}l c c@{}}
\toprule
Hyperparameter & Stage 1 (align) & Stage 2 (instruct) \\
\midrule
Trainable params       & projector only            & full LM + projector \\
Dataset                & LLaVA-Pretrain ($558$K)   & LLaVA-Mix ($665$K) \\
Image resolution       & $336^2$ (112{,}896 px)    & $672^2$ (451{,}584 px) \\
Per-device batch       & $32$                      & $4$ \\
Grad accumulation      & $2$ (NGPU$=2$)            & $16$ \\
Effective batch        & $128$                     & $128$ \\
Optimizer              & AdamW                     & AdamW \\
Learning rate          & $1\!\times\!10^{-4}$      & $2\!\times\!10^{-5}$ \\
LR schedule            & const $+$ warmup          & cosine \\
Warmup ratio           & $0.03$                    & $0.03$ \\
Weight decay           & $0.0$                     & $0.0$ \\
Max grad norm          & $1.0$                     & $1.0$ \\
Epochs                 & $1$                       & $1$ \\
Steps                  & $4361$                    & $\sim 5195$ \\
Precision              & bf16                      & bf16 \\
DeepSpeed              & ZeRO-2                    & ZeRO-2 \\
Save every             & end of stage              & $500$ steps \\
Save total limit       & ---                       & $2$ \\
\bottomrule
\end{tabular}
\caption{Training recipe for the post-pretraining $\qknorm$ injection experiment (vanilla and injected variants are matched on all rows; the $\qknorm$ module is the only difference).}
\label{tab:c3_recipe}
\end{table}

\begin{figure}[ht]
\centering
\includegraphics[width=\columnwidth]{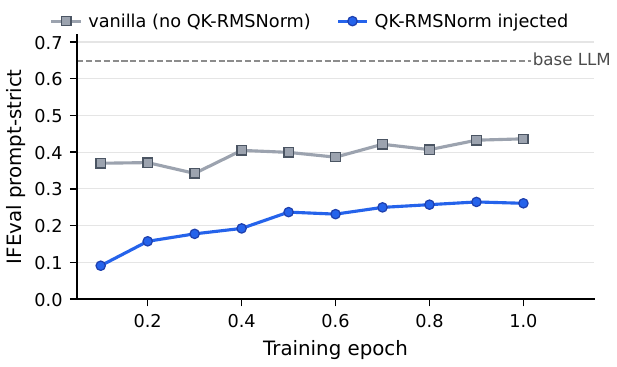}
\caption{IFEval prompt-strict over the LLaVA Stage-2 training epoch for the controlled post-pretraining $\qknorm$ injection pair (vanilla vs.\ injected, matched in all other settings).}
\label{fig:c3_progression}
\end{figure}

\subsection{Weight-Merging Sweep on Format-Sensitive Benches}
\label{app:method_sweep}

The body reports IFEval on the two pairs (Qwen2.5-VL, Qwen3-VL) for seven weight-merging settings: the asymmetric $95\%$ VLM $+$ $5\%$ LLM mix, uniform averaging, full Task Arithmetic ($\lambda{=}1$), Task Singular Vectors ($\alpha{=}1$), STAR ($\eta{=}40\%$), TIES (density $0.2$, $\lambda{=}1$), and DARE (mask rate $0.9$, $\lambda{=}1$).
We also ran the same settings on EQ-Bench, GSM8K-CoT flexible-extract, and GPQA-Diamond-CoT flexible-extract, and summarize the pattern here rather than reproducing three further figures.
It matches IFEval, in that the mild mixes (95/5, uniform averaging) avoid substantial additional degradation, full-strength operators (TA, TSV, STAR) drop the non-protected pair by tens of points, and sparsified merges (TIES, DARE) are the most damaging across both pairs.

\end{document}